\newcommand{\pth}[1]{\left( #1\right)}                 % Parenthesis (*)
\newcommand{\brk}[1]{\left[ #1\right]}                 % Square brackets [*]
\newcommand{\braces}[1]{\left\lbrace #1\right\rbrace } % curly braces {*}
\newcommand{\norm}[1]{\left\lVert#1\right\rVert}       % Norm ||*|| 
\newcommand{\normal}[1]{\mathcal{N}\pth{#1}}
\newcommand{\E}{\mathbb{E}} % Expectation symbo
\newcommand{\Excpt}[2]{\underset{#1\sim #2\,\,}{\E}}
\newcommand{\KLsymbol}{D}  % {D_{KL}}
\newcommand{\KL}[2]{\KLsymbol{\pth{#1||#2}}}
\newcommand{\diag}{\mathop{\mathrm{diag}}}
\newcommand{\Grad}[1]{\nabla_{#1}}
\newcommand{\isEquivTo}[1]{\underset{#1}{\sim}}
\newcommand{\Prob}{\mathbb{P}}
\newcommand{\Union}[2]{\bigcup_{#1}^{#2}}
\newcommand{\Intersect}[2]{\bigcap_{#1}^{#2}}
\DeclareMathOperator*{\argmin}{argmin}
\newtheorem{Theorem}{Theorem}
\newtheorem{Lemma}{Lemma}
\newcommand{\er}[1][Q]{\textit{er} \pth{#1}}
\newcommand{\erhat}[1][Q]{\widehat{\textit{er}} \pth{#1}}
\newcommand{\erhati}[2]{\widehat{\textit{er}}_{#2} \pth{#1}}
\newcommand{\refeq}[1]{{(\ref{#1})}}
\newcommand{\Qcal}{\mathcal{Q}}
\newcommand{\Pcal}{\mathcal{P}}
\newcommand{\loss}[1]{\ell \pth{#1}}
\newcommand{\thetatild}{{\tilde{\theta}}}
\newcommand{\Dcal}{{\cal D }}
\icmltitlerunning{Meta-Learning by Adjusting Priors Based on Extended PAC-Bayes Theory}
\begin{document}
\begin{bibunit} %  bibliography for main text 

\twocolumn[
\icmltitle{Meta-Learning by Adjusting Priors \\ Based on Extended PAC-Bayes Theory}
% \icmltitle{meta-learning by Adjusting Priors: a PAC-Bayes Approach}

% It is OKAY to include author information, even for blind
% submissions: the style file will automatically remove it for you
% unless you've provided the [accepted] option to the icml2018
% package.

% List of affiliations: The first argument should be a (short)
% identifier you will use later to specify author affiliations
% Academic affiliations should list Department, University, City, Region, Country
% Industry affiliations should list Company, City, Region, Country

% You can specify symbols, otherwise they are numbered in order.
% Ideally, you should not use this facility. Affiliations will be numbered
% in order of appearance and this is the preferred way.
\icmlsetsymbol{equal}{*}

\begin{icmlauthorlist}
\icmlauthor{Ron Amit}{Technion}
\icmlauthor{Ron Meir}{Technion}

\end{icmlauthorlist}

\icmlaffiliation{Technion}{The Viterbi Faculty of Electrical Engineering, Technion - Israel Institute of Technology, Haifa, Israel}

\icmlcorrespondingauthor{Ron Amit}{ronamit@campus.technion.ac.il}
\icmlcorrespondingauthor{Ron Meir}{rmeir@ee.technion.ac.il}

% You may provide any keywords that you
% find helpful for describing your paper; these are used to populate
% the "keywords" metadata in the PDF but will not be shown in the document
\icmlkeywords{Machine Learning, ICML, Meta Learning, Transfer Learning, PAC-Bayes}

\vskip 0.3in
]

% this must go after the closing bracket ] following \twocolumn[ ...

% This command actually creates the footnote in the first column
% listing the affiliations and the copyright notice.
% The command takes one argument, which is text to display at the start of the footnote.
% The \icmlEqualContribution command is standard text for equal contribution.
% Remove it (just {}) if you do not need this facility.

\printAffiliationsAndNotice{}  % leave blank if no need to mention equal contribution
%  \printAffiliationsAndNotice{\icmlEqualContribution} % otherwise use the standard text.

% ###################################

\begin{abstract}
In meta-learning an agent extracts knowledge from observed tasks, aiming to facilitate learning of novel future tasks. Under the assumption that future tasks are `related’' to previous tasks,  the accumulated knowledge should be learned in a way which captures the common structure across learned tasks, while allowing the learner sufficient flexibility to adapt to novel aspects of new tasks. We present a framework for meta-learning that is based on generalization error bounds, allowing us to extend various PAC-Bayes bounds to meta-learning. Learning takes place through the construction of a distribution over hypotheses based on the observed tasks, and its utilization for learning a new task. Thus, prior knowledge is incorporated through setting an experience-dependent prior for novel tasks.
We develop a gradient-based algorithm which minimizes an objective function derived from the bounds and demonstrate its effectiveness numerically with deep neural networks.
In addition to establishing the improved performance available through meta-learning, we demonstrate the intuitive way by which prior information is manifested at different levels of the network.

\end{abstract}
% ###########################################################################################################################
\section{Introduction}
Learning from examples is the process of inferring a general rule from a finite set of examples. It is well known in statistics (e.g., \cite{DevGyoLug96}) that learning cannot take place without prior assumptions. 
%\rsm{This idea has led in Machine Learning to the notion of inductive bias \citep{mitchell1980need}}{}.
Recent work in deep neural networks has achieved significant success in using prior knowledge in the implementation of structural constraints, e.g., convolutions and weight sharing \citep{lecun2015deep}. However, often the relevant prior information for a given task is not clear, and there is a need for building it through learning from previous interactions with the world. Learning from previous experience can take several forms: \textbf{Continual Learning} \citep{kirkpatrick2017overcoming} - a learning agent is trained on a sequence of tasks, aiming to solve the current task while maintaining good performance on previous tasks. 
\textbf{Multi-Task Learning} \citep{caruana1998multitask} - a learning agent learns how to solve several observed tasks, while exploiting their shared structure.
\textbf{Domain Adaptation} \citep{ben2010theory} - a learning agent  solves a `target’' learning task using `source’' tasks (both are observed, but usually the target is predominantly unlabeled).
We work within the framework of  \textbf{ Meta-Learning / Learning-to-Learn / Inductive Transfer } \citep{Thrun-1997-14508, vilalta2002perspective}
\footnote{In our setting all observed tasks are available simultaneously to the meta-learner. The setting in which task are observed sequentially is often termed as \textbf{Lifelong Learning }\citep{thrun1996learning,alquier2017regret}}
in which a `meta-learner’' extracts knowledge from several observed tasks to facilitate the learning of \emph{new tasks} by a `base-learner’' (see Figure \ref{fig:Diagram}). In this setup the meta-learner must generalize from a finite set of observed tasks. The performance is evaluated when learning related new tasks (which are unavailable to the meta-learner) .

As a motivational example, consider the case in which a meta-learner observes many image classification tasks of natural images, and uses a CNN to learn each task. The meta-learner might learn a prior which fixes the lower layers of the network to extract generic image features, but allows variation in the higher layers to adapt to new classes. Thus, new tasks can be learned using fewer examples than learning from scratch (e.g., \cite{yosinski2014transferable}).
Generally, other scenarios might instead benefit from sharing other parts of the network  (e.g., \cite{yin2017knowledge}). In our framework the prior is automatically inferred from the observed tasks, rather than being manually inserted by the algorithm designer.

%	We work within the framework of \emph{meta-learning}, where an agent  learns through interacting with the world, transferring the knowledge acquired along its path to any new task it encounters. 

The notion of `task-environment' was formulated by \cite{baxter2000model}. In analogy to the standard single-task learning where data is sampled from an unknown distribution, Baxter suggested a setting where tasks are sampled from an unknown task distribution (environment), so that knowledge acquired from previous tasks can be used in order to improve performance on a novel task. Baxter's work not only provided an interesting and mathematically precise perspective for meta-learning, but also provided generalization bounds demonstrating the potential improvement in performance due to prior knowledge. 
% \rsm{Baxter's seminal work, has led to a large number of extensions and developments.}{} 

In this paper we work within the framework formulated by \cite{baxter2000model}, and, following the setup in \cite{pentina2014pac}, provide generalization error bounds within the PAC-Bayes framework. These bounds are then used to develop a practical learning algorithm that is applied to neural networks, demonstrating the utility of our approach. 
The main contributions of this work are the following. 
\textit{(i)} An improved and tighter bound in the theoretical framework of \cite{pentina2014pac} derived using a technique which can extend different single-task PAC-Bayes bounds to the meta-learning setup.
\textit{(ii)}  A principled meta-learning method and its implementation using probabilistic feedforward neural networks. 
\textit{(iii)} Empirical demonstration of the performance enhancement compared to naive approaches as well as recent methods in this field. 

\begin{figure}[t]
	\includegraphics[scale=0.25]{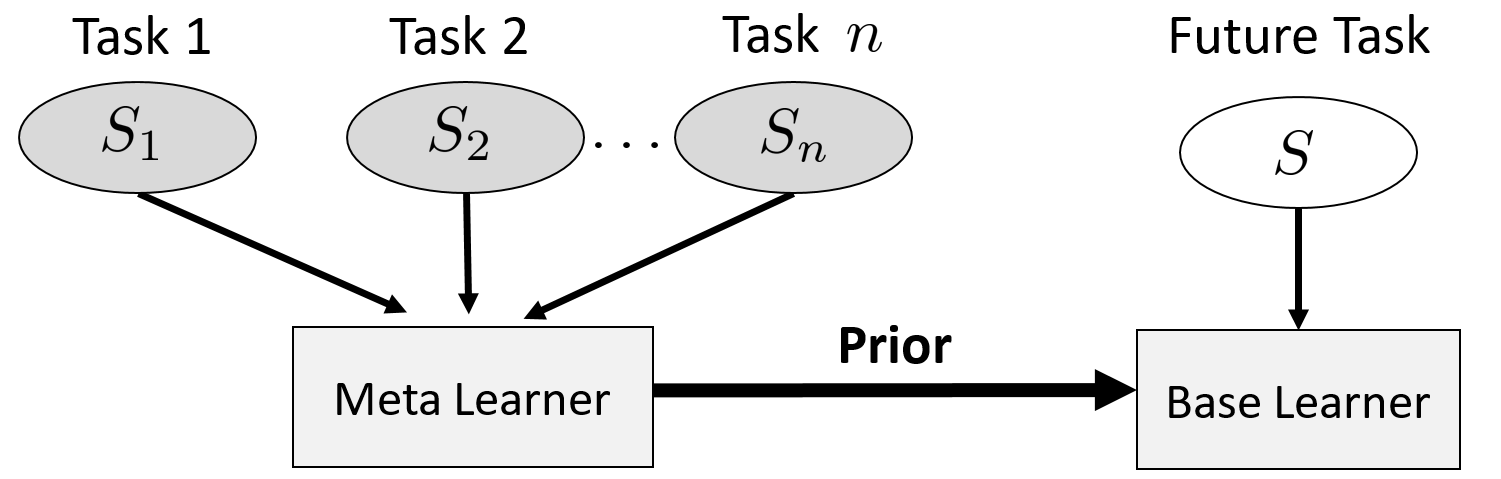}
	\centering
	\caption{The meta-learner uses the data sets of the observed tasks $S_1,...,S_n$ to infer `prior knowledge'  which in turn can facilitate learning in future tasks from the task-environment (which are unobserved by the meta-learner).}
	\label{fig:Diagram}
\end{figure}

\paragraph{Related Work} 
% \rsm{As noted above, \citet{baxter2000model} provided a basic mathematical formulation and initial results for meta-learning.}{} 
While there have been many recent developments in meta-learning (e.g., \citet{edwards2016towards,andrychowicz2016learning,finn2017model}), most of them were not based on generalization error bounds, which is the focus of the present work. An elegant extension of generalization error bounds to meta-learning was provided by \citet{pentina2014pac}, mentioned above (extended in \citet{pentina2015lifelong}). Their work, however, did not provide a practical algorithm applicable to deep neural networks. More recently, \citet{dziugaite2017computing} developed a
single-task algorithm based on PAC-Bayes bounds that was demonstrated to yield good performance in simple classification tasks with deep networks. Other recent theoretical approaches to meta or multitask learning (e.g.~\citet{maurer2005algorithmic, maurer2009transfer, ruvolo2013ella, maurer2016benefit, alquier2017regret}) provide increasingly general bounds but have not led to practical algorithms for neural networks.  
% {\color{blue} \citet{mcnamara2017risk} proved a generalization bound for transferring weights ....}

%%%%%%%%%%%%%%%%%%%%%%%%%%%%%%%%%%%%%%%%%%%%%%%%%%%%%%%%%%%%%%%%%%%%%%%%%%%%%%%%%%%%%%%%%%%%%	
\section{Preliminaries: PAC-Bayes Learning} \label{sect:Preliminaries}

In the common setting for learning, a set of independent samples, $S=\{z_i\}_{i=1}^m$, from a space of examples $\mathcal{Z}$, is given, each sample drawn from an \emph{unknown} probability distribution $\Dcal$, namely $z_i\sim \Dcal$.
 We will use the notation $S\sim \Dcal^m$ to denote the distribution over the full sample. 
In supervised learning, the samples are input/output pairs $z_i=(x_i,y_i)$.
The usual learning goal is, based on $S$,  to find a hypothesis $h\in\mathcal{H} $,  where $\cal H$ is the so-called hypothesis space, that minimizes the expected loss function $\mathbb{E}\ell(h,z)$, where $\ell(h,z)$ is a \emph{loss function} bounded in $[0,1]$ .
As the distribution $\Dcal$ is unknown, learning consists of selecting an appropriate $h$ based on the sample $S$. In classification $\mathcal{H}$ is a space of classifiers mapping the input space to a finite set of classes.  As noted in the Introduction, an inductive bias is required for effective learning.
While in the standard approach to learning, described in the previous paragraph, one usually selects a single classifier (e.g., the one minimizing the empirical error), the PAC-Bayes framework, first formulated by \citet{mcallester1999pac}, considers the construction of a complete probability distribution over $\mathcal{H}$, and the selection of a single hypothesis $h\in\mathcal{H}$ based on this distribution. Since this distribution depends on the data it is referred to as a \textit{posterior distribution} and will be denoted by $Q$. We note that while the term `posterior' has a Bayesian connotation, the framework is not necessarily Bayesian, and the posterior does not need to be related to the prior through the likelihood function as in standard Bayesian analysis. 

The PAC-Bayes framework has been widely studied in recent years, and has given rise to significant flexibility in learning, and, more importantly, to some of the best generalization bounds available \cite{seeger2002pac,  catoni2007pac, audibert2010pac,lever2013tighter}.
{Recent works analyzed transfer-learning in neural networks with PAC-Bayes tools \citep{galanti2016theoretical,mcnamara2017risk}}.
The framework has been recently extended to the meta-learning setting by \cite{pentina2014pac}, and will be extended and applied to neural networks in the present contribution.	

\subsection{Single-task Problem Formulation} \label{sect:SingleTaskDef}
Following the notation introduced above we define the expected error $\er[h,\Dcal] \triangleq \Excpt{z}{\Dcal} \ell(h,z)$ and the empirical error $\erhat[h,S] \triangleq (1/m)\sum_{j=1}^{m}\loss{h, z_i}$ \rm{} {for a single hypothesis $h \in \mathcal{H}$}. Since the distribution $\Dcal$ is unknown, $\er[h,\Dcal]$ cannot be directly computed. 
% \paragraph{PAC-Bayesian learning}
In the PAC-Bayes setting the learner outputs a distribution over the entire hypothesis space $\mathcal{H}$,  i.e, the goal is to provide a \textit{posterior} distribution $Q \in \mathcal{M}$, where  $\mathcal{M}$ denotes the set of distributions over  $\mathcal{H}$. The \textit{expected error} and \textit{empirical error} are then given in this setting by averaging  over the posterior distribution $Q \in \mathcal{M}$, namely 
$\er[Q,\Dcal] \triangleq \Excpt{h}{Q} \er[h,\Dcal]$ and 
$\erhat[Q, S] \triangleq \Excpt{h}{Q} \erhat[h,S]$, respectively.
% \rsm{This average describes a \textit{ Gibbs prediction} procedure - first draw a hypothesis $h$ from 
% $Q$, then apply it to the sample $z$. }{}
%Similarly to \eqref{eq:error-standard}, the generalization error $\er[Q,\Dcal]$ cannot be directly computed since $\Dcal$ is unknown. 

\subsection{PAC-Bayes Generalization Bound} \label{sect:SingleTaskBound}
In this section we introduce a PAC-Bayes bound for the single-task setting.
The bound will also serve us for the meta-learning setting in the next sections. 
PAC-Bayes bounds are based on specifying some `prior' reference distribution $P \in \mathcal{M}$,  that must not depend on the observed data $S$.
The distribution over hypotheses $Q$ which is provided as an output from the learning process is called the posterior (since it is allowed to depend on $S$)
\footnote{As noted above, the terms `prior' and `posterior' might be misleading, since, this is not a Bayesian inference setting (the prior and posterior are not connected through the Bayes rule). However, PAC-Bayes and Bayesian analysis have interesting and practical connections, as we will see in the next sections (see also \citet{germain2016pac}).}.	
%	Many PAC-Bayesian style generalization bounds have appeared in the literature. Since our meta-learning framework can work with different single-task bounds, we will first state a generic form.
%	
The classical PAC-Bayes theorem for single-task learning was formulated by \citet{mcallester1999pac}. 
%Later \citet{maurer2004note} proved the following slightly tighter result
%\footnote{We use the version of the bound from \citet{maurer2004note}, which requires $m \geq 8$ (for brevity we will not repeat mentions of this assumption throughout the paper).
%For comparison, original PAC-Bayes theorem \citep{mcallester1999pac} is formulated using the complexity term $ \brk{\pth{\KL{Q}{P} + \log (2m/\delta)}/(2m-1)}^{1/2}$ for any $m \geq 1$.}
\begin{Theorem}[McAllester's single-task bound]   
\label{thm:OriginalPacBayes}
	Let $P \in \mathcal{M}$ be some prior distribution over $\mathcal{H}$.
	Then for any $\delta \in (0,1]$, the following inequality holds uniformly for all posteriors distributions $Q \in \mathcal{M}$ with probability at least $1-\delta$,
	\begin{equation*} 
    %\label{eq:GeneralSingleTaskBound}
%	\mathbb{P}_{S \sim \Dcal^m} \left\{
	\er[Q,\Dcal] \leq 	\erhat[Q,S] + \sqrt{\frac{ \KL{Q}{P} + \log \frac{m}{\delta}}{2(m-1)}}
%	, \forall Q \in \mathcal{M}
%	\right\} \geq 1-\delta~,
	\end{equation*}
\end{Theorem}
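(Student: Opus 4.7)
My plan is to follow the standard PAC-Bayes recipe built on three classical ingredients: the change-of-measure (variational) inequality for KL divergence, a single-hypothesis moment generating function bound coming from the fact that the loss is in $[0,1]$, and finally Markov's inequality plus Jensen's inequality to pass to the posterior mean.

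First, I would recall the change-of-measure inequality: for any data-independent prior $P$, any measurable function $\phi : \mathcal{H} \times \mathcal{Z}^m \to \mathbb{R}$, and any (possibly data-dependent) posterior $Q$,
\begin{equation*}
\Excpt{h}{Q} \phi(h,S) \;\le\; \KL{Q}{P} + \log \Excpt{h}{P} e^{\phi(h,S)}.
\end{equation*}
This is just the Donsker--Varadhan variational representation of KL and is the tool that lets us trade $Q$ for $P$ at the cost of a KL term. The crucial point is that $P$ does not depend on $S$, so the inner expectation over $P$ commutes with an outer expectation over $S$.

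Next I would fix $\phi(h,S) = 2(m-1)\bigl(\er[h,\Dcal] - \erhat[h,S]\bigr)^2$ and define the random variable $\xi(S) = \Excpt{h}{P} e^{\phi(h,S)}$. Because the per-sample losses lie in $[0,1]$, the empirical error $\erhat[h,S]$ is an average of $m$ independent $[0,1]$ random variables with mean $\er[h,\Dcal]$, and a Hoeffding-type moment generating function bound gives, for each fixed $h$, $\Eund{S}\bigl[e^{2(m-1)(\er[h,\Dcal]-\erhat[h,S])^2}\bigr] \le m$ (this is the sub-Gaussian / Maurer-style estimate that produces the constant $2(m-1)$ in the theorem). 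Applying Fubini yields $\Eund{S}\xi(S) \le m$, and then Markov's inequality gives $\xi(S) \le m/\delta$ on an event of probability at least $1-\delta$.

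On that event I would plug the chosen $\phi$ into the change-of-measure inequality, obtaining
\begin{equation*}
2(m-1)\,\Excpt{h}{Q}\bigl(\er[h,\Dcal] - \erhat[h,S]\bigr)^2 \;\le\; \KL{Q}{P} + \log\frac{m}{\delta}.
\end{equation*}
Jensen's inequality applied to $x \mapsto x^2$ lets me replace the left-hand side by $2(m-1)\bigl(\er[Q,\Dcal] - \erhat[Q,S]\bigr)^2$, since $\er[Q,\Dcal] - \erhat[Q,S] = \Excpt{h}{Q}(\er[h,\Dcal] - \erhat[h,S])$. Rearranging and taking square roots gives the claimed inequality.

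The only genuinely delicate step is the Hoeffding-type moment bound $\Eund{S}e^{2(m-1)(\er-\erhat)^2} \le m$; everything else is an essentially mechanical application of change of measure, Markov, and Jensen. One has to be a little careful with the precise constant so that the final bound matches the stated $2(m-1)$ in the denominator rather than $2m$ or $8m$; the Maurer refinement of Hoeffding for $[0,1]$-valued averages is what delivers exactly this constant.
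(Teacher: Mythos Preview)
Your argument is correct and is the standard textbook derivation of McAllester's bound (change of measure, Maurer's moment bound $\Eund{S}e^{2(m-1)(\er-\erhat)^2}\le m$, Markov, then Jensen). The paper, however, does not supply its own proof of this theorem: it is stated as a classical result and attributed to \citet{mcallester1999pac} (and, in the restatement in the supplementary material, to \citet{mcallester1999pac, shalev2014understanding}). So there is no paper-proof to compare against; what you have written is precisely the argument one finds in those references, with the $2(m-1)$ constant coming from the Maurer refinement you correctly flag as the only delicate step.
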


where $\KL{Q}{P}$ is the  Kullback-Leibler (KL) divergence,
$
\KL{Q}{P} \triangleq \Excpt{h}{Q} \log \frac{Q(h)}{P(h)}.
$

% \footnote{In PAC-Bayesian analysis, the KL-divergence term usually originates from the change of measure inequality \citep{donsker1975asymptotic}.} 
%See section \ref{sect:SingleBounds} in the appendix for a summary of possible complexity terms, obtained from well-known PAC-Bayesian single task bounds.

Theorem \ref{thm:OriginalPacBayes} can be interpreted as stating that with high probability the expected error $\er[Q,\Dcal]$ is upper bounded by the empirical error plus a complexity term. Since, with high probability, the bound holds uniformly for all $Q \in \mathcal{M}$, it holds also for data dependent $Q$. By choosing $Q$ that minimizes the bound we obtain a learning algorithm with generalization guarantees. Note that PAC-Bayes bounds express a trade-off between fitting the data (empirical error) and a complexity/regularization term (distance from prior) which encourages selecting a `simple' hypothesis, namely one similar to the prior. 
%The contribution of the prior-dependent regularization term to the objective is more significant for a smaller data set. For asymptotically large sample size $m$, the complexity term converges to zero.
%	 Specific rates of convergence are further discussed in the appendix. 
The specific choice  of $P$ affects the bound's tightness and so should express  prior knowledge about the problem. Generally, we want the prior to be close to posteriors which can achieve low training error. %For example, we may want to use a prior that prefers simpler hypotheses (Occam's razor).  
%	In the meta-learning scenario will show how to learn the prior based on past experience from other learning tasks.

% 	In all bounds, for asymptotically large sample size $m$, the complexity $\varPsi()$  converges to zero. The bound of theorem \ref{thm:OriginalPacBayes} converges at a rate of about $1/\sqrt{m}$, as in basic VC-like bounds. The bound of theorem \ref{thm:SeegerBound} converges even faster (at a rate if $\frac{1}{m}$) if the empirical error $\erhat[Q]$ is negligibly small (compared to $\frac{1}{m} \KL{Q}{P}$). Since this is common the case in modern deep learning, we expect this bound to be tighter than others in this regime.
% 	{\color{red} TODO: maybe move part of this paragraph to the appendix}

% In general, the bound might not be tight, and can even be vacuous, i.e., greater than the maximal value of the loss.	However, \citet{dziugaite2017computing} recently showed that a PAC-Bayes bound can achieve non-vacuous values with deep-networks and real data sets. In this work our focus is on deriving an algorithm for  meta-learning, rather than on actual calculation of the bound. We expect that even if the numerical value of the bound is vacuous, it still captures the behavior of the generalization error and so minimizing the bound is a good learning strategy.\rsm{}{[Do we need to state this?]}

%%%%%%%%%%%%%%%%%%%%%%%%%%%%%%%%%%%%%%%%%%%%%%%%%%%%%%%%%%%%%%%%%%%%%%%%%%%%%%%%%%%%%%%%%%%%%
\section{PAC-Bayes Meta-Learning}
In this section we introduce the meta-learning setting.  In this setting a meta-learning agent observes several `training' tasks from the same task environment.
The meta-learner must extract some common knowledge (`learned prior') from these tasks, which will be used for learning new tasks from the same environment.
In the literature this setting is often called learning-to-learn, lifelong-learning, meta-learning or bias learning \citep{baxter2000model}. We will formulate the problem and provide a generalization bound which will later lead to a practical algorithm. Our work extends \citet{pentina2014pac} and establishes a potentially tighter bound.
Furthermore, we will demonstrate how to apply this result practically to deep neural networks using stochastic learning. 

\subsection{Meta-Learning Problem Formulation} \label{sect:MetaProblemDef}
The meta-learning problem formulation follows \citet{pentina2014pac}.
We assume all tasks share the sample space $\mathcal{Z}$, hypothesis space $\mathcal{H}$ and loss function $\ell: \mathcal{H}\times \mathcal{Z} \to \brk{0,1}$.
The learning tasks differ in the unknown sample distribution $\Dcal_t$ associated with each task $t$.
The meta-learning agent observes the training sets $S_1,...,S_n$  corresponding to $n$ different tasks.	The number of samples in task $i$ is denoted by $m_i$.
Each observed dataset $S_i$ is assumed to be generated from an unknown sample distribution  $S_i \sim \Dcal^{m_i}_i$. As in \citet{baxter2000model}, we assume that the sample distributions $\Dcal_i$ are generated $i.i.d.$~from an unknown tasks distribution $\tau$.
The goal of the meta-learner is to extract some knowledge from the observed tasks that will be used as prior knowledge for learning new (yet unobserved) tasks from $\tau$. The prior knowledge comes in the form of a distribution over hypotheses, $P \in \mathcal{M}$. When learning a new task, the \textit{base learner} uses the observed task's data $S$ and the prior $P$ to output a posterior distribution $Q(S,P)$ over $\mathcal{H}$.
We assume that all tasks are learned via the same learning process. Namely, for  a given $S$ and $P$ there is a specific output $Q(S,P)$.
Hence the base learner $Q$ is a mapping:  $Q: \mathcal{Z}^{m} \times \mathcal{M} \rightarrow  \mathcal{M}.$
\footnote{In the next section we will use stochastic optimization methods as learning algorithms, but we can assume convergence to a same solution for any execution with a given $S$ and $P$.}

The quality of a prior $P$ is measured by the expected loss when using it to learn new tasks, as defined by,
\begin{equation} 
\er[P, \tau] \triangleq \Excpt{(\Dcal,m)}{\tau} \Excpt{S}{\Dcal^m} \Excpt{h}{Q(S,P)} \Excpt{z}{\Dcal}  \ell(h,z).
\end{equation}
The expectation is taken w.r.t.~\textit{(i)} tasks drawn from the task environment, \textit{(ii)} training samples, \textit{(iii)} hypotheses drawn from the posterior which is learned based on the training samples and prior \textit{(iv)} a `test' sample.

As described in section \ref{sect:Preliminaries}, in the single-task PAC-Bayes framework, the learner assumes a prior over hypotheses $P(h)$, then observes the training samples and outputs a posterior distribution over hypotheses $Q(h)$.
In an analogous way, in the meta-learning PAC-Bayes framework, the meta-learner assumes a prior distribution over priors, a `hyper-prior' $\Pcal(P)$,  observes the training tasks, and then outputs a distribution over priors, a `hyper-posterior' $\Qcal(P)$.

We emphasize that while the hyper-posterior is learned using the observed tasks, the goal is to use it for learning new, independent task from the environment.
When encountering a new task, the learner samples a prior from the hyper posterior $\Qcal(P)$, and then use it for learning. 
Ideally, the performance of the hyper-posterior $\Qcal$  is measured by the expected loss of learning new tasks using priors drawn from $\Qcal$.  
This quantity is denoted as the \textit{transfer error}
\begin{equation} \label{def:TransferRisk}
\er[\Qcal, \tau]  \triangleq \Excpt{P}{\Qcal} \er[P, \tau].
\end{equation}
While $\er[\Qcal, \tau]$ is not computable,  we can however evaluate the average empirical risk when learning the observed tasks using priors drawn from $\Qcal$, which is denoted as the \textit{empirical multi-task error}
\begin{equation} \label{eq:EmpricalMultiTaskRisk}
\erhat[\Qcal,S_1,...,S_n] \triangleq \Excpt{P}{\Qcal} \frac{1}{n}\sum_{i=1}^n \erhat[Q(S_i,P),S_i],
\end{equation} 

{In the single-task PAC-Bayes setting one selects a prior $P\in \mathcal{M}$ before seeing the data, and updates it to a posterior $Q\in \mathcal{M}$ after observing the training data. In the present meta-learning setup, following \citet{pentina2014pac}, one selects an initial hyper-prior distribution $\mathcal{P}$, essentially a distribution over prior distributions $P$, and, following the observation of the data from all tasks, updates it to a hyper-posterior distribution $\mathcal{Q}$.
As a simple example, assume the initial prior $P$ is a Gaussian distribution over neural network weights, characterized by a mean and covariance. A hyper distribution would correspond in this case to a distribution over the mean and covariance of $P$.} 

\subsection{Meta-Learning PAC-Bayes Bound} \label{sect:MetaBound}
In this section we present a novel bound on the transfer error in the meta-learning setup.
The theorem is proved in section \ref{sect:MetaBoundProof} of the supplementary material.

\begin{Theorem}[Meta-learning PAC-Bayes bound] \label{thm:MetaBound}
	Let $Q: \mathcal{Z}^{m} \times \mathcal{M} \rightarrow  \mathcal{M}$ be a base learner, and let $\Pcal$ be some predefined hyper-prior distribution.
	Then for any $\delta \in (0,1]$ the following inequality holds uniformly for all hyper-posterior distributions $\Qcal$ with probability  at least $1-\delta$, \footnote{The probability is taken over sampling of $(\Dcal_i, m_i) \sim\tau$ and  $S_i \sim  \Dcal_i^{m_i},i=1,...,n$.}
	\begin{gather}  \label{eq:MetaBound}
	%\mathbb{P}_{(\Dcal_i, m_i) \sim\tau, S_i \sim  \Dcal_i^{m_i},i=1,...,n} \bigg\{  
	\er[\Qcal, \tau] \leq 
\frac{1}{n} \sum_{i=1}^{n} \Excpt{P}{\Qcal}  \erhati{Q_i,S_i}{i}   \\
+\frac{1}{n} \sum_{i=1}^{n}  \sqrt{\frac{ \KL{\Qcal}{\Pcal} + \Excpt{P}{\Qcal}  \KL{Q_i}{P}  + \log \frac{2n m_i}{\delta}}{2(m_i - 1)}}  \nonumber\\
+\sqrt{\frac{ \KL{\Qcal}{\Pcal} + \log \frac{2n}{\delta}}{2(n-1)}},\nonumber
	%, \forall \Qcal \bigg\} \geq 1 - \delta \nonumber.
	\end{gather}
where $Q_i  \triangleq Q(S_i,P)$.
\end{Theorem}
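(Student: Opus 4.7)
The argument is a two-level PAC-Bayes decomposition: Theorem~\ref{thm:OriginalPacBayes} (or, more precisely, the sub-Gaussian moment-generating-function inequality underlying its proof) is instantiated once at the sample level inside each observed task and once at the task level across tasks. By inserting the auxiliary quantity $\Excpt{P}{\Qcal}\frac{1}{n}\sum_{i=1}^n \er[Q_i,\Dcal_i]$, the gap between the transfer error and the empirical multi-task error splits as
\[
\er[\Qcal,\tau] - \erhat[\Qcal,S_1,\ldots,S_n] \;=\; (\mathrm{I}) + (\mathrm{II}),
\]
where $(\mathrm{I}) = \Excpt{P}{\Qcal}\bigl[\er[P,\tau] - \tfrac{1}{n}\sum_i \er[Q_i,\Dcal_i]\bigr]$ is the environment-level gap and $(\mathrm{II}) = \Excpt{P}{\Qcal}\tfrac{1}{n}\sum_i \bigl(\er[Q_i,\Dcal_i]-\erhati{Q_i,S_i}{i}\bigr)$ is the within-task gap. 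I would control each with failure probability $\delta/2$ and union-bound.

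For $(\mathrm{II})$, for every fixed $P$ and each $i$, the standard sub-Gaussian/MGF inequality underlying the proof of Theorem~\ref{thm:OriginalPacBayes} gives a uniform-in-$Q_i$ bound on $\Excpt{S_i}{\Dcal_i^{m_i}} \exp\bigl(2(m_i{-}1)(\er[Q_i,\Dcal_i]-\erhati{Q_i,S_i}{i})^2 - \KL{Q_i}{P}\bigr)$. Starting from this exponential form (rather than from the already-rooted statement of Theorem~\ref{thm:OriginalPacBayes}) and applying the Donsker--Varadhan change-of-measure identity $\Excpt{P}{\Qcal} f(P) \le \KL{\Qcal}{\Pcal} + \log \Excpt{P}{\Pcal} e^{f(P)}$ transfers the outer expectation from $\Qcal$ to $\Pcal$, against which the MGF bound is directly applicable, and \emph{adds} $\KL{\Qcal}{\Pcal}$ to the numerator under the eventual square root. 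A Jensen step pushes the $\Excpt{P}{\Qcal}$ expectation inside the square root, and a union bound over $i=1,\ldots,n$ with per-task confidence $\delta/(2n)$ produces the $\log(2nm_i/\delta)$ factor, yielding the first sum in \refeq{eq:MetaBound}.

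For $(\mathrm{I})$, I would regard each observed task as an i.i.d.\ draw $(\Dcal_i,m_i)\sim\tau$ and, for each prior $P$, define the task-level loss $X_i(P) := \er[Q(S_i,P),\Dcal_i] \in [0,1]$, whose expectation over $(\Dcal_i,m_i)\sim\tau$ and $S_i\sim\Dcal_i^{m_i}$ equals $\er[P,\tau]$. Applying Theorem~\ref{thm:OriginalPacBayes} at this higher level (with the $n$ tasks as examples, $\mathcal{M}$ as the hypothesis space, hyper-prior $\Pcal$, hyper-posterior $\Qcal$, and confidence $\delta/2$) yields the third summand in \refeq{eq:MetaBound} directly. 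The main obstacle is step $(\mathrm{II})$: single-task PAC-Bayes gives a bound for each \emph{fixed} $P$, whereas here $P$ is sampled from a data-dependent $\Qcal$; the key is to refrain from extracting the square root before applying the change of measure on $P$, since this ordering is precisely what causes $\KL{\Qcal}{\Pcal}$ to appear \emph{inside} the first square root rather than only as a separate additive term, and is the source of the claimed tightening over \citet{pentina2014pac}.
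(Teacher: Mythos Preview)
Your two-level decomposition, the confidence split $\delta_0=\delta/2$, $\delta_i=\delta/(2n)$, and the task-level application of Theorem~\ref{thm:OriginalPacBayes} for term $(\mathrm{I})$ are exactly what the paper does. Your handling of term $(\mathrm{II})$ is also correct and yields the same inequality, but the paper achieves it more simply. Rather than descending to the MGF level, performing a nested Donsker--Varadhan change-of-measure (first in $h$, then in $P$), and finishing with a Jensen step, the paper applies Theorem~\ref{thm:OriginalPacBayes} \emph{once} per task to the enlarged hypothesis space $\mathcal{M}\times\mathcal{H}$: the ``hypothesis'' is the pair $f=(P,h)$, the prior $\pi$ samples $P\sim\Pcal$ then $h\sim P$, and the posterior $\rho$ samples $P\sim\Qcal$ then $h\sim Q(S_i,P)$. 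The chain rule for KL then gives $\KL{\rho}{\pi}=\KL{\Qcal}{\Pcal}+\Excpt{P}{\Qcal}\KL{Q_i}{P}$ directly, so the $\KL{\Qcal}{\Pcal}$ term lands inside the square root automatically, without any need to delay the root extraction. What you identify as the ``main obstacle'' (ordering the change of measure before the square root) is thus not an obstacle at all in the paper's packaging; the tuple-hypothesis trick absorbs both levels of randomness into a single black-box invocation of McAllester. Your route has the minor advantage of making the mechanism fully explicit, which is helpful if one later wants to swap in a single-task bound whose proof structure differs from McAllester's; the paper's route is shorter and avoids re-deriving the bound.
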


Notice that the transfer error \eqref{def:TransferRisk} is bounded by the empirical multi-task error (\ref{eq:EmpricalMultiTaskRisk}) plus two complexity terms. The first is the average of the task-complexity  terms of the observed tasks. This term converges to zero in the limit of a large number of samples  in each task ($m_i \rightarrow \infty$). 
The second  is an environment-complexity term.  This term converges to zero if infinite number of tasks is observed from the task environment ($n \rightarrow \infty$).	
As in \citet{pentina2014pac}, our proof is based on two main steps.
The second step, similarly to \citet{pentina2014pac}, bounds the transfer-risk at the task-environment level (i.e, the error caused by observing only a finite number of tasks) by the average expected error in the observed tasks plus the environment-complexity term.
%, $\er[\Qcal, \tau]$, by the average generalization error in the observed tasks plus the environment-complexity term.
The first step differs from \citet{pentina2014pac}.
Instead of using a single joint bound on the average expected error, we use a single-task PAC-Bayes theorem to bound the expected error in each task separately (when learned using priors from the hyper-posterior), and then use a union bound argument.
By doing so our bound takes into account the specific number of samples in each observed task (instead of their harmonic mean). 
Therefore our bound is better adjusted the observed data set.

Our proof technique can utilize different single-task bounds in each of the two steps.
In section \ref{sect:MetaBoundProof} we use McAllester's bound (Theorem \ref{thm:OriginalPacBayes}), which is tighter than the lemma used in \citet{pentina2014pac}.
Therefore, the complexity terms are in the form of $\sqrt{\frac{1}{m}\KL{Q}{P}}$ instead of $\frac{1}{\sqrt{m}}\KL{Q}{P}$ as in \citet{pentina2014pac}. This means the bound is tighter 
\footnote{E.g., \citet{seldin2012pac} Theorems 5 and 6}.
In section \ref{sec:AlternativeBounds} we demonstrate how our technique can use other, possibly tighter, single-task bounds.
% Another distinction is in the case in which an infinite number of tasks is observed, but each has only a few samples.
% In contrast to \citet{pentina2014pac},  in Theorem \ref{thm:MetaBound} the hyper-prior still has an effect on the bound. 
% Intuitively, the prior knowledge we had before observing tasks (hyper-prior) should still have an effect on the bound unless the observed tasks contain enough information (samples).
In Section \ref{sect:Results} we will empirically evaluate the different bounds as meta-learning objectives and show that the improved tightness is critical for performance.

%%%%%%%%%%%%%%%%%%%%%%%%%%%%%%%%%%%%%%%%%%%%%%%%%%%%%%%%%%%%%%%%%%%%%%%%%%%%%%%%%%%%%%%%%%%%%%	
\section{Meta-Learning Algorithm}

As in the single-task case, the bound  of Theorem \ref{thm:MetaBound} can be evaluated from the training data and so can serve as a minimization objective for a  principled meta-learning algorithm. Since the bound holds uniformly for all $\Qcal$, it is ensured to hold also for the minimizer of the objective $\Qcal^*$. 
{Provided that the bound is tight enough, the algorithm will approximately minimize the transfer-risk itself, avoiding overfitting to the observed tasks.}
In this section we will derive a practical learning procedure that can applied to a large family of differentiable models, including deep neural networks.

\subsection{Hyper-Posterior Model}

In this section we choose a specific form of hyper-posterior distribution $\Qcal$ which enables practical implementation.
Given  a parametric family of priors  $\braces{P_{\thetatild} : \thetatild \in \mathbb{R}^{N_P}}$, $N_P \in \mathbb{N}$,  the space of hyper-posteriors consists of all distributions over $\mathbb{R}^{N_P}$.
We will limit our search to a family of isotropic Gaussian distributions defined by 
$\Qcal_{\theta} \triangleq  \normal{\theta, \kappa_\Qcal^2 I_{N_P \times N_P}}, $ 
where $\kappa_\Qcal > 0$ is a predefined constant.
Notice that $\Qcal$ appears in the bound (\ref{eq:MetaBound}) in two forms (i)  divergence from the hyper-prior $\KL{\Qcal}{\Pcal}$ and (ii)  expectations over $P \sim \Qcal$.
%	We abuse notation by denoting $J(\theta)=J(\mathcal{Q})$.

%	Notice that while choosing a vary narrow distribution (such as Delta) is the the simplest solution, it will make the bound very large and trivial (because of the term $\KL{\Qcal}{\Pcal} $ ).

By setting the hyper-prior as zero-mean isotropic Gaussian, $\Pcal = \normal{0, \kappa_\Pcal^2 I_{N_P \times N_P}}$, where $\kappa_\Pcal > 0$  is another constant, we get a simple form for the KL-divergence term, 
% $\KL{\Qcal_{\theta}}{\Pcal} = \frac{1}{2 \kappa_\Pcal^2} \norm{\theta}_2^2.$
$\KL{\Qcal_{\theta}}{\Pcal} = \frac{\norm{\theta}_2^2 + \kappa_\Qcal^2}{2 \kappa_\Pcal^2}  + \log \frac{\kappa_\Pcal}{\kappa_\Qcal}  -\frac{1}{2} .$
Note that the  hyper-prior acts as a regularization term which prefers solutions with small $L_2$ norm.

The expectations can be approximated by averaging several Monte-Carlo samples of $P$.
Notice that  sampling from $\Qcal_{\theta}$ means adding Gaussian noise to the parameters $\theta$ during training, 
$\thetatild = \theta +  \varepsilon_P,  \varepsilon_P \sim \normal{0, \kappa_\Qcal^2 I_{N_P \times N_P}}$. 
This means the learned parameters must be robust to perturbations, which encourages selecting solutions which are less prone to over-fitting. 
%and are expected to generalize better \citep{chaudhari2016entropy, keskar2016large}.

\subsection{Joint Optimization}

The term appearing on the RHS of the meta-learning bound in (\ref{eq:MetaBound})  can be compactly written as 
\begin{equation} \label{eq:TotalObj}
J(\theta) \triangleq \frac{1}{n}\sum_{i=1}^{n} J_i(\theta) + \Upsilon(\theta) ,
\end{equation}	
where we defined, 
\begin{align} \label{eq:TaskObj}
&J_i(\theta)  \triangleq  \Excpt{\thetatild}{\Qcal_{\theta}}  \erhati{Q_i(S_i,P_\thetatild), S_i}{i}   \\
 +&\sqrt{\frac{\KL{\Qcal_{\theta}}{\Pcal} +\Excpt{\thetatild}{\Qcal_{\theta}}  \KL{Q(S_i,P_\thetatild)}{P_\thetatild}+ \log \frac{2n m_i}{\delta} }{2(m_i-1)}}, \nonumber\\
 & \Upsilon(\theta) \triangleq \sqrt{\frac{\KL{\Qcal_{\theta}}{\Pcal} + \log \frac{2n}{\delta}}{2(n-1)}}.
\end{align}	
Theorem \ref{thm:MetaBound} allows us to choose \textit{any} procedure $Q(S_i,P): \mathcal{Z}^{m_i} \times \mathcal{M} \rightarrow  \mathcal{M}$ as a base learner. 
We will use a procedure which minimizes $J_i(\theta)$ due to the following advantages: \textit{(i)} It minimizes a bound on the expected error of the observed task \footnote{See section \ref{sect:MetaBoundProof} in the supplementary material.}.
\textit{(ii)} It uses the prior knowledge gained from the prior $P$ to get a tighter bound and a better learning objective. \textit{(iii)} As will be shown next, formulating the single task learning as an optimization problem enables joint learning of the shared prior and the task posteriors.

To formulate the single-task learning as an optimization problem, we choose a parametric form for the posterior of each task $Q_{\phi_i}, \phi_i \in \mathbb{R}^{N_Q}$ (see section \ref{sect:DistributionModel} for an explicit example).
The base-learning algorithm can be formulated as $\phi_i^* = \argmin_{\phi_i} J_i(\theta, \phi_i)$, where we abuse notation by denoting the term $J_i(\theta)$ evaluated with posterior parameters $\phi_i$ as $J_i(\theta, \phi_i)$.
%	$ J_i(\theta, \phi_i)$ is the single-task PAC-Bayes bound of the $i$-th observed task, evaluated with the prior parameters $\theta$ and the posterior parameters $\phi_i$.
The meta-learning problem of minimizing $J(\theta)$ over $\theta$ can now be written more explicitly, 
%	
% 	\begin{equation*}
% 	\theta^* = \argmin_{\theta} J(\theta) = \argmin_{\theta} \left\{\frac{1}{n}\sum_{i=1}^{n} \min_{\phi_i} J_i(\theta, \phi_i) + \frac{1}{\sqrt{n}}  \log  \frac{1}{\Pcal(\theta)}\right\} .
% 	\end{equation*}
\begin{equation} \label{eq:meta_learn_problem}
\min_{\theta, \phi_1,...,\phi_n} \left\{\frac{1}{n}\sum_{i=1}^{n}  J_i(\theta, \phi_i) + \Upsilon(\theta)\right\}.
\end{equation}
The optimization process is illustrated in Figure \ref{fig:algorithm}.

\begin{figure}[t]
	\begin{center}
		%\framebox[4.0in]{$\;$}
		%		\fbox{\rule[-.5cm]{0cm}{4cm} \rule[-.5cm]{4cm}{0cm}}
		\includegraphics[scale=0.2]{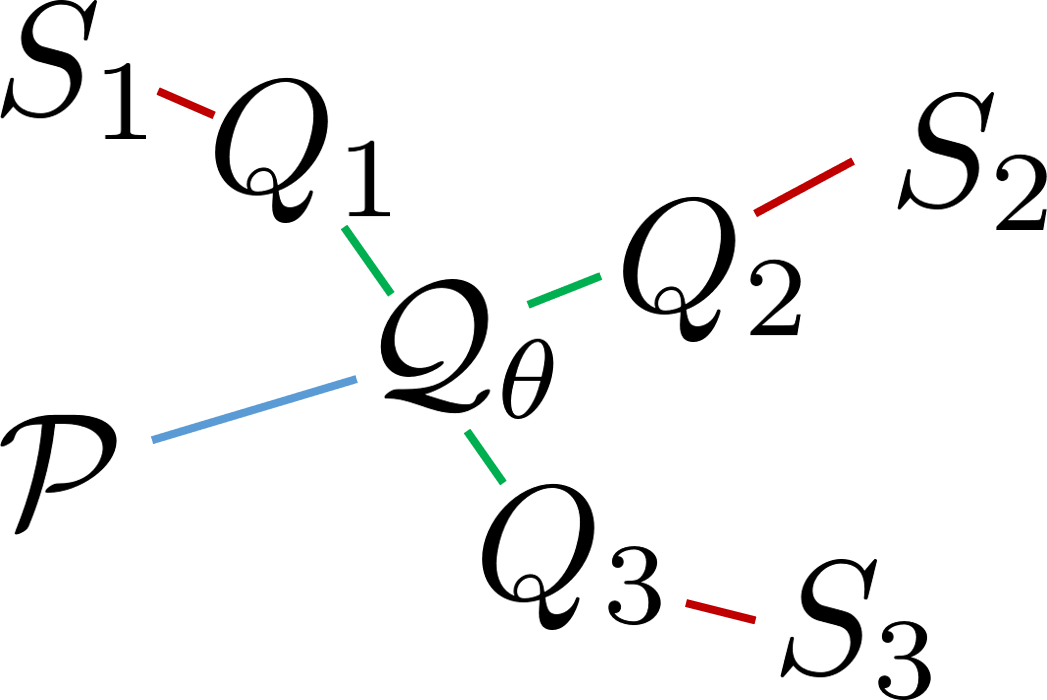}
	\end{center}
	\caption{\textbf{Joint optimization illustration.} The posterior of task $i$,  $Q_i = Q_{\phi_i}$ is influenced by both the dataset $S_i$ (through the empirical error term) and by the hyper-posterior $\Qcal_{\theta}$ (through the task-complexity term).
    The hyper-posterior $\Qcal_{\theta}$ is influenced by  both the posteriors and by the hyper-prior $\Pcal$ (through the environment-complexity term). 
%     `tries' to minimize $J_i$  by lowering empirical loss on the observed task data $S_i$ (red lines) while also staying close to the prior sampled from $\Qcal_{\theta}$ (green lines).
% 		The hyper-posterior $\Qcal_{\theta}$ tries to be close to the posteriors $Q_i$ and also to the the hyper-prior regularizer $\Pcal$  (blue line).
% 		Note that the more samples a dataset $S_i$ has, the stronger  $Q_i$ is pulled to the data  (as oppose to the prior).
% 		Similarly, if the number of observed tasks $n$ is larger, then the hyper-posterior is less influenced by to hyper-prior.			
% 		{\color{red} TODO: re-phrase}
	}\label{fig:algorithm}
\end{figure}

%	\footnote{On the other hand,  a task with fewer samples `pulls' the prior $P$ more.
%		This has an intuitive explanation - some areas of the tasks environment are more likely to generate tasks with low number of samples.
%		Placing the prior in this areas will lead to learning with lower expected error when taking the expectation over the  tasks environment distribution.}

\subsection{Distributions Model} \label{sect:DistributionModel}
In this section we make the meta-learning optimization problem (\ref{eq:meta_learn_problem}) more explicit by defining a model for the posterior and prior distributions. First, we define the hypothesis class $\mathcal{H}$ as  a family of functions parameterized by a\textit{ weight vector} $\braces{h_w:w \in \mathbb{R}^d}$.
Given this parameterization, the posterior and prior are distributions over $\mathbb{R}^d$.

We will present an algorithm for any  differentiable model \footnote{The only assumption on $\braces{h_w:w \in \mathbb{R}^d}$  is that the loss function $\ell(h_w,z)$ is differentiable w.r.t $w$.}, 
but our aim is to use neural network (NN) architectures.
In fact, we will use Stochastic NNs \citep{graves2011practical, blundell2015weight}
since in our setting the weights are random and we are optimizing their posterior distribution.
The techniques presented next will be mostly based on \citet{blundell2015weight}.
Next we define the posteriors $Q_{\phi_i},~i=1,...,n,$ and the prior $P_{\theta}$ as factorized Gaussian distributions\footnote{This choice makes optimization easier, but in principle we can use other distributions as long as the density function is differentiable w.r.t.~the parameters.},
\begin{equation} \label{eq:Q_distrib}
P_{\theta}(w) = \prod_{k=1}^{d} \normal{w_k; \mu_{P,k}, \sigma_{P,k}^2} 
\end{equation}
\begin{equation}
Q_{\phi_i}(w) = \prod_{k=1}^{d} \normal{w_k; \mu_{i,k}, \sigma_{i,k}^2}
\end{equation}
where for each task, the posterior parameters vector  $\phi_i=(\mu_{i}, \rho_{i})\in \mathbb{R}^{2d}$ is composed of the means and log-variances of each weight , $\mu_{i,k}$ and $\rho_{i,k} = \log {\sigma^2_{P,k}}, k=1,...,d$.\footnote{Note that we use $\rho = \log \sigma^2$  as a parameter in order to keep the parameters unconstrained (while  $\sigma^2 = \exp(\rho)$ is guaranteed to be strictly positive).}
The shared prior vector $\theta=(\mu_{P}, \rho_{P}) \in \mathbb{R}^{2d}$  has a similar structure.
Since we aim to use deep models where $d$ could be in the order of millions, distributions with more parameters might be impractical. 

Since $Q_{\phi_i}$ and $P_{\theta}$ are factorized Gaussian distributions the KL-divergence, $\KL{Q_{\phi_i}}{P_{\theta}}$, takes a simple analytic form, 
\begin{equation} \label{eq:KLDTerm}
\frac{1}{2}  \sum_{k=1}^{d}   \braces{  \log \frac{\sigma^2_{P,k}}{\sigma^2_{i,k}}  + 
	\frac{\sigma^2_{i,k} + \pth{\mu_{i,k} - \mu_{P,k}}^2}{\sigma^2_{P,k}} - 1}.
\end{equation}

%	As for the hyper-prior  $\Pcal(\theta)$, we choose a simple factorized Laplace distribution
%	$\log 1/\Pcal \pth{\theta} = \kappa \norm{\theta}_1 + c$, where $\kappa$ is some predefined constant. The Laplace distribution encourages sparsity which may help generalization.

\subsection{Optimization Technique}
As an underlying optimization method, we will use stochastic gradient descent (SGD).
%\footnote{Or some other variant of SGD.}
In each iteration, the algorithm takes a parameter step in a direction of an estimated negative gradient. As is well known, lower variance facilitates convergence and its speed.
Recall that each single-task bound is composed of an empirical error term and a complexity term (\ref{eq:TaskObj}). 
% 	\begin{equation} \label{eq:IntraTaskObj}
% 	J_i(\theta, \phi_i) \triangleq \erhat[Q_{\phi_i},S_i] + \varPsi(S_i,m_i,Q_{\phi_i},P_{\theta},\delta_i) .
% 	\end{equation}	
The complexity term is a simple function of $\KL{Q_{\phi_i}}{P_{\theta}}$  (\ref{eq:KLDTerm}), which can easily be differentiated analytically. However, evaluating the gradient of the empirical error term  is more challenging.

Recall the definition of the empirical error, $\erhat[Q_{\phi_i},S_i]  = \mathbb{E}_{w \sim Q_{\phi_i}} (1/m_i)\sum_{j=1}^{m_i}\loss{h_w, z_{i,j}}$. 
% 	\begin{equation}
% 	\erhat[Q_{\phi_i},S_i]  = \Excpt{w}{Q_{\phi_i}}\frac{1}{m_i}\sum_{j=1}^{m_i}\loss{h_w, z_j}.
% 	\end{equation}
This term poses two major challenges. \textit{(i)} The data set $S_i$ could be very large making it expensive to cycle over all the $m_i$ samples. \textit{(ii)} The term $\loss{h_w, z_j}$ might be highly non-linear in $w$, rendering the expectation intractable.	Still, we can get an unbiased and low variance estimate of the gradient.

First, instead of using all of the data for each gradient estimation we will use a randomly sampled mini-batch $S'_i\subset S_i $. 
% 	\begin{equation} \label{eq:GradeEstBatch}
% 	\Grad{\phi_i} \erhat[Q_{\phi_i},S_i] \cong \Grad{\phi_i} \Excpt{w}{Q_{\phi_i}}\frac{1}{\abs{S'_i}}\sum_{z \in S'_i}\loss{h_w, z} = \frac{1}{\abs{S'_i}}\sum_{z \in S'_i} \Grad{\phi_i} \Excpt{w}{Q_{\phi_i}}  \loss{h_w, z}.
% 	\end{equation}
%	
Next, we require an estimate of a gradient of the form $\Grad{\phi} \Excpt{w}{Q_{\phi}}  f(w)$ which is a common problem in machine learning. We will use the `re-parametrization trick' \citep{kingma2013auto, rezende2014stochastic} which is an efficient and low variance method
\footnote{In fact, we will use the `\textbf{local} re-parameterization trick' \citep{kingma2015variational} in which we sample a different $\varepsilon$ for each data point in the batch, which reduces the variance of the estimate. To make the computation more efficient with neural-networks, the random number generation is performed w.r.t the
	activations instead of the weights (see \citet{kingma2015variational} for more details.).} 
.
The re-parametrization trick is easily applicable in our setup since we are using Gaussian distributions. The trick is based on describing the Gaussian distribution  $w \sim Q_{\phi_i}$ (\ref{eq:Q_distrib}) as first drawing $\varepsilon \sim \normal{\bar{0}, I_{d \times d}}$ 
and then applying the deterministic function $w(\phi_i , \varepsilon) = \mu_i + \sigma_i \odot \varepsilon$
(where $\odot$  is an element-wise multiplication).
Therefore, we  get 
$
\Grad{\phi} \Excpt{w}{Q_{\phi}}  f(w) = \Grad{\phi} \Excpt{\varepsilon}{\normal{\bar{0}, I_{d \times d}}}  f(w(\phi_i , \varepsilon)).
$
The expectation can be approximated by averaging a small number of Monte-Carlo samples with reasonable accuracy. For a fixed sampled $\varepsilon$, the gradient $\Grad{\phi} f(w(\phi_i , \varepsilon))$ is easily computable with backpropagation.

In summary, the Meta-Learning by Adjusting Priors (MLAP) algorithm  is composed of two phases 	
In the first phase (Algorithm \ref{algorithm:MetaTrain}, termed ``meta-training") several observed ``training tasks" are used to learn a prior.
In the second phase (Algorithm \ref{algorithm:MetaTest}, termed ``meta-testing") the previously learned prior is used for the learning of a new task (which was unobserved in the first phase).
Note that the first phase can be used independently as a multi-task learning method. Both algorithms are described in pseudo-code in the supplementary material (section \ref{sect:code})
\footnote{Code is available at:  \url{https://github.com/ron-amit/meta-learning-adjusting-priors}.}
\footnote{For a visual illustration of the algorithm using a toy example see section \ref{sect:ToyExample} in the supplementary material.}.

\section{Experimental Demonstration} \label{sect:Results}

In this section we demonstrate the performance of our transfer method with image classification tasks solved by deep neural networks.
In image classification, the data samples, $z \triangleq (x,y)$, consist of a an image, $x$, and a label, $y$.
The hypothesis class $\braces{h_w : w \in \mathbb{R}^d}$ is the set of neural networks with a given architecture (which will be specified later).
As a loss function $\ell(h_w,z)$ we will use the cross-entropy loss.
While the theoretical framework is defined with a bounded loss, in our experiments we use an unbounded loss function in the learning objective. Still, we can have theoretical guarantees on a variation of the loss which is clipped to $[0,1]$. Furthermore, in practice the loss function is almost always smaller than one.

We conduct two experiments with two different task environments, based on augmentations of the MNIST dataset \citep{lecun1998mnist}.
In the first environment, termed \textit{permuted labels}, each task is created by a random permutation of the labels.
In the second environment,  termed \textit{permuted pixels},  each task is created by a permutation of the image pixels. 
The pixel permutations are created by  a limited number of location swaps to ensure that the tasks stay reasonably related.

In both experiments, the meta-training set is composed of tasks from the environment with $60,000$ training examples.
Following the meta-training phase, the learned prior is used to learn a new meta-test task with fewer training samples ($2,000$).
The network architecture used for the permuted-labels experiment is a small CNN with $2$ convolutional-layers, a linear hidden layer and a linear output layer.
In the permuted-pixels experiment we used a fully-connected network with 3 hidden layers and a linear output layer.
See section  \ref{sect:ImplemetDetails} for more implementation details.

%	\begin{figure} 
%		\centering
%		\includegraphics[width=0.45\textwidth]{error_vs_tasks_num}
%		\caption{The average test error of learning a new tasks for different number of training-tasks and for different environments (average over 20 test-tasks).}
%		\label{fig:error_vs_tasks_num}

%	\end{figure}
%	\begin{figure} 
%	\centering
%	\includegraphics[width=0.4\textwidth]{tasks_N_zoom}
%	\caption{Zoom in.}
%	\label{fig:tasks_N_zoom}
%\end{figure}

{We compare the average test error of learning a new task from each environment when using the following methods.
As a baseline, we  measure the performance of learning from scratch, i.e., with no transfer from the meta-training tasks. }\textbf{Scratch-D}: deterministic (standard) learning from scratch.
\textbf{Scratch-S}: stochastic learning from scratch (using a stochastic network with no prior/complexity term).

Other methods transfer knowledge from only \textbf{one} of the meta-training tasks.
\textbf{Warm-start}: Standard learning with initial weights taken from the standard learning of a single task from the meta-training set.
% (with $60,000$ examples). 
\textbf{Oracle}: Same as the previous method, but some of the layers are frozen (unchanged from their initial value) depending on the experiment.
	In the permuted labels experiment  all layers besides the output are frozen. 
%     which is a common practice for transfer learning in computer vision \citep{razavian6382cnn}.
	In the permuted pixels we freeze all layers except the input layer.
  We refer to this method as `oracle' since the transfer technique is tailored to each task-environment, while the other methods are applied identically in any environment (and so must learn to adjust to the environment automatically).
% 	Note that in this method we are manually inserting prior knowledge based on our familiarity with the task environment. 
% 	Therefore this method can be considered an ``oracle".		

Finally, we compare methods which transfer knowledge from \textbf{all} of the training tasks:
	  \textbf{MLAP-M}: The objective is based on Theorem \ref{thm:MetaBound} - the meta-learning bound obtained using Theorem \ref{thm:OriginalPacBayes} (McAllester's single-task bound).	
	  \textbf{MLAP-S}: The objective is based on the meta-learning bound derived from Seeger's single-task bound
      (see section \ref{sec:AlternativeBounds} in the supplementary material, eq.\refeq{eq:MetaBoundSeeger}).
	  \textbf{MLAP-PL}:  In this method we use the main theorem of  \citet{pentina2014pac} as an objective for the algorithm, instead of Theorem \ref{thm:MetaBound}.	
	  \textbf{MLAP-VB}:
		In this method the learning objective is derived from a Hierarchal Bayesian framework using variational Bayes tools
\footnote{See section \ref{sect:VariationalBayes} in the supplementary material for details.	 The explicit learning objective is in equation \refeq{eq:VariationalObjective}.}.
	%		In this method the task-complexity term  is the -divergence between the sampled prior and the task posterior and the environment-complexity term  is the -divergence between the hyper-prior and hyper-posterior. The empirical loss corresponds to the log-livelihood of the samples, which is summed over samples because of the  independence assumption.
	%		This minimization problem is equivalent to maximization of the Evidence-Lower-Bound (ELBO)  when using a Variational Bayes (VB) methods to approximate the maximum-likelihood parameters of a hierarchical generative model.		
	%		Note that the ELBO can also be interpreted as an upper bound on the generalization error. However the bound is looser than the one obtained using PAC-Bayesian methods
	%		\footnote{See discussion in section \ref{sect:MetaBound}.}.		
	  \textbf{Averaged}: Each of the training tasks is learned in a standard way to obtain a weights vector, $w_i$. 
	The learned prior is set as an isotropic Gaussian with unit variances and a mean vector which is the average of $w_i,i=1,..,n$.
	This  prior is used for meta-testing as in MLAP-S.		
	  \textbf{MAML}: The Model-Agnostic-Meta-Learning (MAML) algorithm by \citet{finn2017model}. 
   In MAML the base learner takes few gradient steps from an initial point, $\theta$, to adapt to a task.
   The meta-learner optimizes $\theta$ based on the sum of losses on the observed tasks after base-learning.
	%		The disadvantage of this method is the trade-off between taking few gradient steps (and exploiting larger number of samples) or taking many gradient steps (the effect of the initial point is less significant).
	We tested several hyper-parameters and report the best results (see details in  the supplementary material \ref{sect:ImplemetDetails}).

\begin{table}[t]	
	\caption{Comparing the average test error percentage of different learning methods on  $20$ test tasks (the $\pm$ shows the $95\%$ confidence interval) in the permuted labels and permuted pixels experiments ($200$ swaps).}
\label{table:MNIST_Results}
	\vskip 0.15in
	\begin{center}
		\begin{small}
			\begin{sc}
				\begin{tabular}{lcccr}
					\toprule
					Method & Permuted Labels & Permuted Pixels \\
					\midrule
				\textbf{Scratch-S}               & $2.27 \pm 0.06$       & $ 7.92 \pm  0.22$ \\
				\textbf{Scratch-D}             & $2.82 \pm 0.06$    	& $ 7.65 \pm 0.22$ \\
				\textbf{Warm-start} 			& $1.07 \pm 0.03$       & $7.95 \pm 0.39$ \\			
				\textbf{Oracle} 		     	& $0.69 \pm 0.04$       & $6.57 \pm 0.32$ \\
					\midrule
				\textbf{MLAP-M} 				          	& $0.908 \pm 0.04$   	& $3.4 \pm 0.18$ \\	
				\textbf{MLAP-S} 				          	& $ 0.75 \pm 0.03$      & $3.54 \pm 0.2$  \\	
				\textbf{MLAP-PL} 				        & $ 82.8 \pm 5.26$ 	    & $74.9 \pm 4.03$ \\
				\textbf{MLAP-VB} 				        & $0.85  \pm 0.03$ 	    & $3.52 \pm 0.17$ \\	
				\textbf{Averaged} 				& $2.72 \pm 0.08$ 		& $7.63 \pm 0.36$ \\	
				\textbf{MAML} 							&$1.16 \pm 0.07$     	& $3.77 \pm 0.8$ \\		
					\bottomrule
				\end{tabular}
			\end{sc}
		\end{small}
	\end{center}
	\vskip -0.1in
\end{table}

Table \ref{table:MNIST_Results} summarizes the results for the permuted labels experiment with $5$ training-tasks and the permuted pixels experiment  with 200 pixel swaps and $10$ training-tasks.
In the permuted labels experiment the best results are obtained with the ``oracle" method.
Recall that the oracle method has the ``unfair" advantage of  a ``hand-engineered" transfer technique which is based on knowledge about the problem.
In contrast, the other methods  must automatically learn the task environment by observing several tasks.

The MLAP-M and MLAP-S variants of the MLAP algorithm  improves considerably over learning from scratch and over the naive warm-start transfer. 
They even improve over the ``oracle" method in the permuted pixels experiment.
%	As expected the the MLAP-S variant preforms better since it uses a tighter bound (see section \ref{sec:AlternativeBounds}).
The result of the MLAP-VB are close to the MLAP-M and MLAP-S variants.
% \rsm{
% However the  MLAP-PL variant performed much worse, which demonstrates the importance of using the tight generalization bound developed in our work as a learning objective. }
{However the  MLAP-PL variant performed much worse since the complexity terms are disproportionately large compared to the empirical error terms.
This demonstrates the importance of using the tight generalization bound developed in our work as a learning objective.}
The results for the ``averaged-prior"  method are about the same as learning from scratch. Due to the high non-linearity of the problem, averaging weights was not expected to perform well.

The results of  the MLAP algorithm are slightly better than MAML.
Note that in MAML the meta-learning only infers an initial point for base-learning.
Thus there is a trade-off in choosing the number of adaptation steps.
Taking many gradient steps exploits a larger number of samples but the effect of the  initial weights diminishes.
Also, taking a large number of steps is computationally infeasible in meta-training.
Therefore MAML is especially suited for few-shot learning, which is not the case in our experiment.
In our method we infer a prior that serves both as an initial point and as a regularizer which can fix some of the weights, while allowing variation in others, depending on the amount of data.
Recent work by \citet{grant2018recasting} showed that MAML can be interpreted as an approximate empirical Bayes procedure. This interesting perspective, differs from the present contribution that is  based on generalization bounds within a non-Bayesian setting. 

{Next we investigate whether using more training tasks improves the quality of the learned prior.}
In Figure \ref{fig:error_vs_tasks_num} we plot the average test error of learning a new task based on the number of training-tasks in the different environments,
{namely the permuted labels environment, and the permuted pixels environment with $100, 200, 300$ pixel swaps. }
 { We used the MLAP-S variant of the algorithm.} The results clearly show that the more tasks are used to learn the prior, the better the performance on the new task.
For example,  in the permuted labels case, a prior that is learned based on one or two tasks leads to negative transfer, i.e, worse results than standard learning from scratch (with no transfer),  which achieves $2.27\%$ error. However after observing $3$ or more tasks, the transfered prior facilitates learning with lower expected error. 
In the permuted pixels experiment, standard learning from scratch achieves  $7.9\%$ test error. 
The number of training tasks needed for positive transfer depends on the number of pixels swapped.
A higher number of swaps means larger variation in the task environment  and more training-tasks are needed to learn a beneficial prior.

% An interesting future direction is to integrate the MLAP framework with the base learner of MAML.

%A beneficial propriety of MAML is that the only learned parameters are the meta-parameters (initial point).
% Note that MAML is specifically suited for learning from many few-shot tasks, in which taking a small number of gradient steps in each task is effective for learning. However, in our experiment, there are a few tasks but  more than a few samples. Still, the method performed quite well with several different sets of hyper-parameters.

% {\color{blue} it finds only an initial point, not a regularizer with learned flexibility, also, in meta-training, the base learner is limited to a few gradient steps because back-propagation gets computationally expensive. Still it achieves impressive results in few-shot learning.
% Interesting future direction is to integrate the MLAP framework with the base learner of MAML. }

%	As an interesting side note we observe that the stochastic models achieve slightly better results even without a prior term. This can be explained intuitively by the fact that noise injection during training leads to a solution that is  more robust to perturbations, i.e.\ a wider minimum, which has been  shown to achieve better generalization \citep{keskar2016large, chaudhari2016entropy}.

\begin{figure}
\vskip 0.2in
\begin{center}
	 \subfigure[]{ 
		\includegraphics[width=0.2\textwidth]{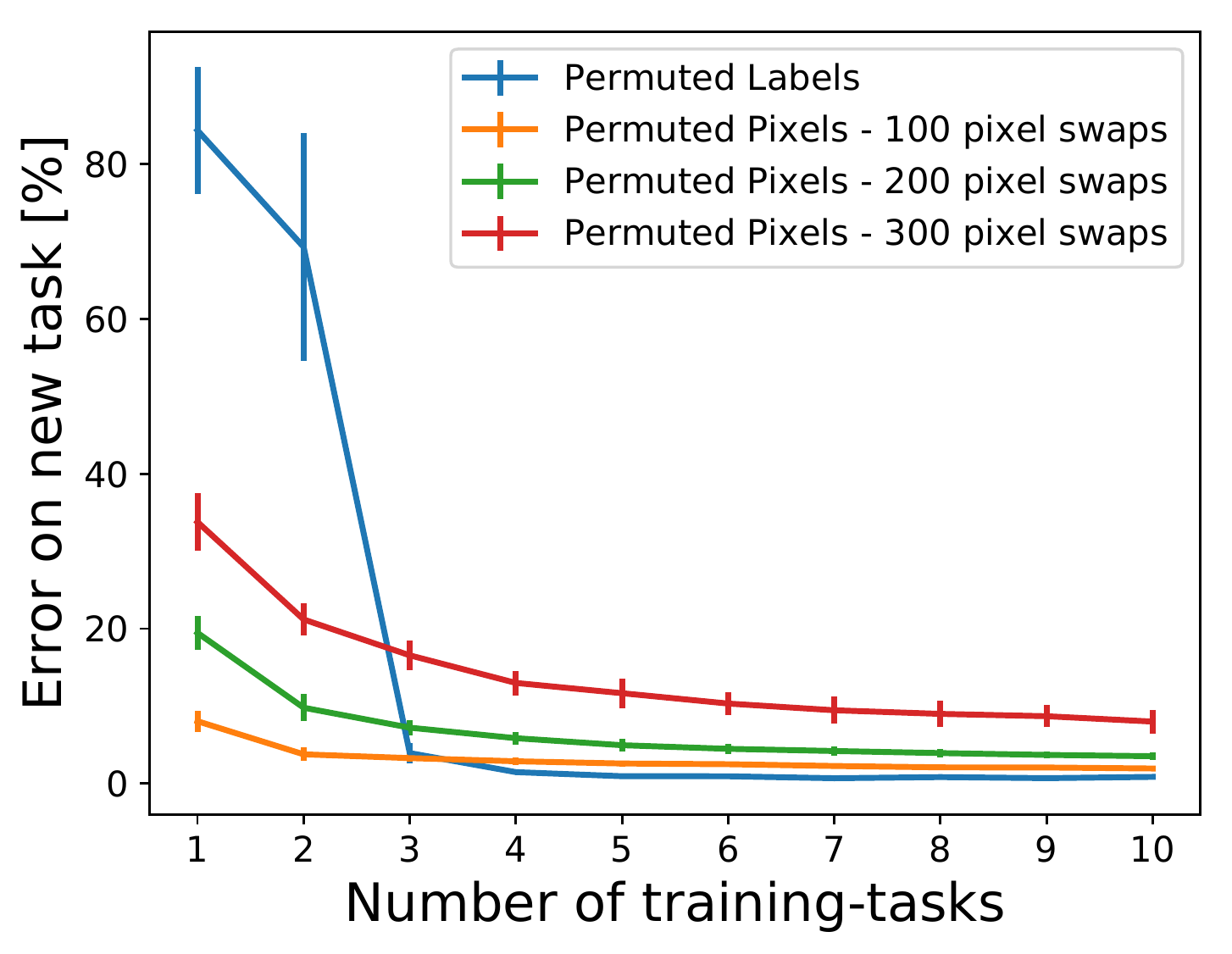}
	}
	 \subfigure[]{ 		
		\includegraphics[width = 0.2\textwidth]{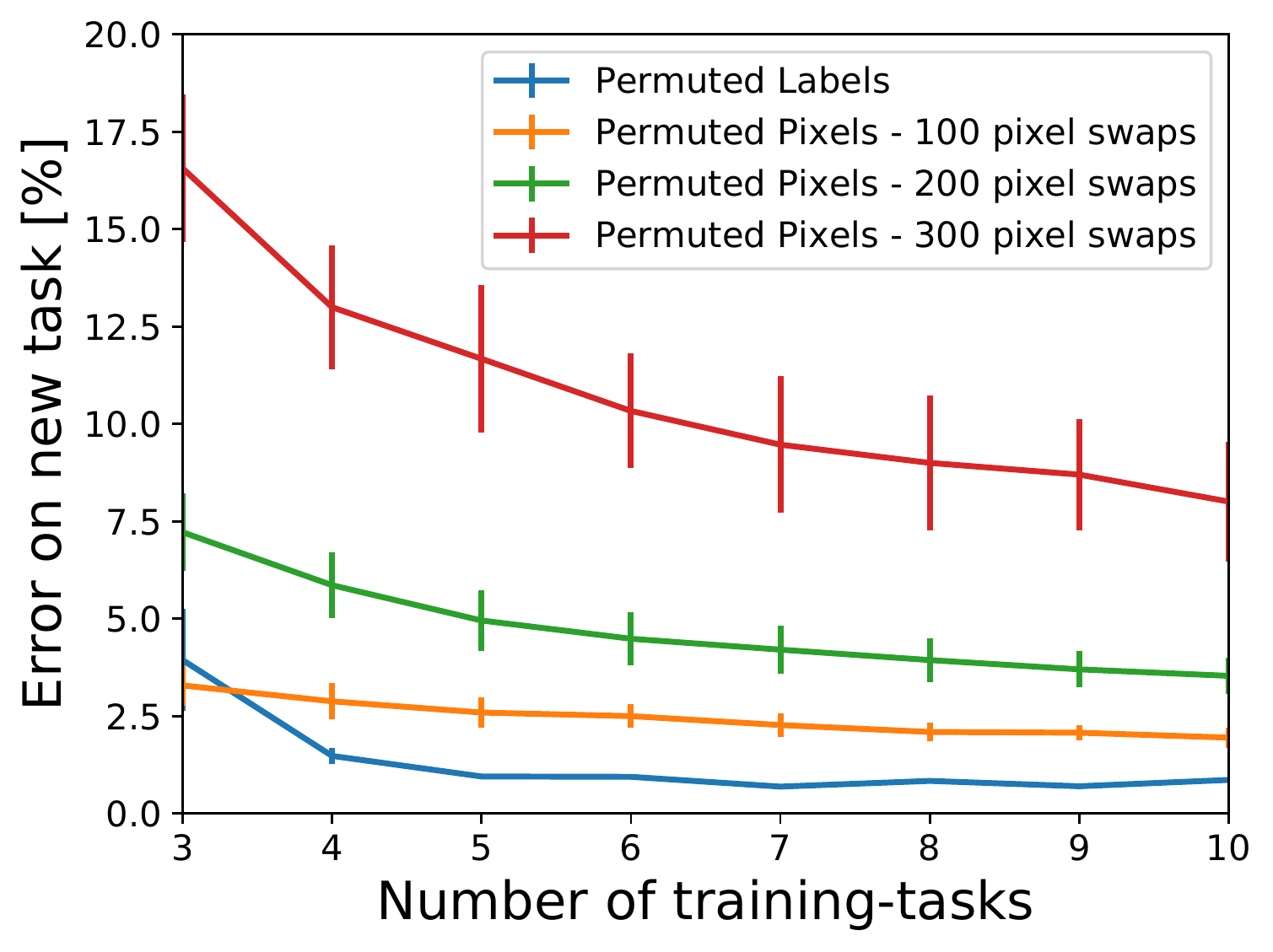}
	}
	\caption{The average test error of learning a new task for different numbers of training-tasks and for different environments (average over $20$ meta-test tasks). Figure (b) reproduces (a) starting with $3$ tasks. 
    Best viewed in color.}
  \label{fig:error_vs_tasks_num}
\end{center}
\vskip -0.2in
\end{figure}

%%-----------------------------------------------------------------

\paragraph{Analysis of learned prior}
Qualitative examination of the learned prior affirms that it has indeed adjusted to  each task environment. 
{
In Figure \ref{fig:Log-Var-Analysis} we inspect the average log-variance parameter the learned prior assigns to the weights of each layer in the network. Higher values of this parameter indicate that the weight is more flexible to change. i.e, it is more weakly penalized for deviating form the nominal prior value.}
In the permuted-labels experiment the learned prior assigns low variance to the lower layers (fixed representation) and high variance to the output layer (which enable easy adjustment to different label permutations).
As expected, in the permuted-pixels experiment the opposite phenomenon occurs. 
The mapping from the final hidden layer to the output becomes fixed, and the mapping from the input to the final hidden layer (representation) has more flexibility to change in light of the task data.

\begin{figure}
	\vskip 0.2in
	\begin{center}
	 \subfigure[]{ 
		\includegraphics[width = 0.47\linewidth]{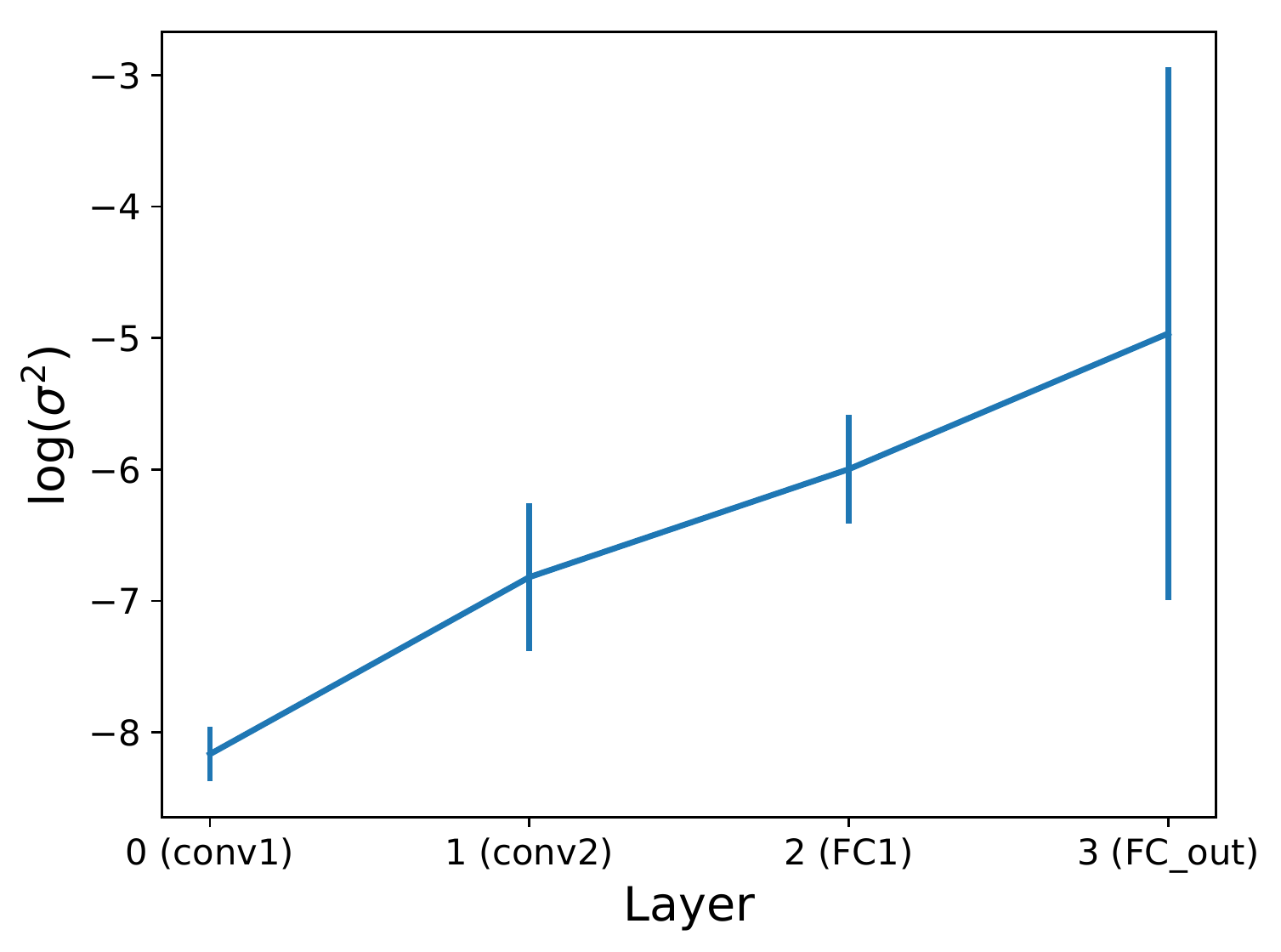}
%		\caption{Permuted labels experiment}
	}
	 \subfigure[]{ 
		\includegraphics[width = 0.47\linewidth]{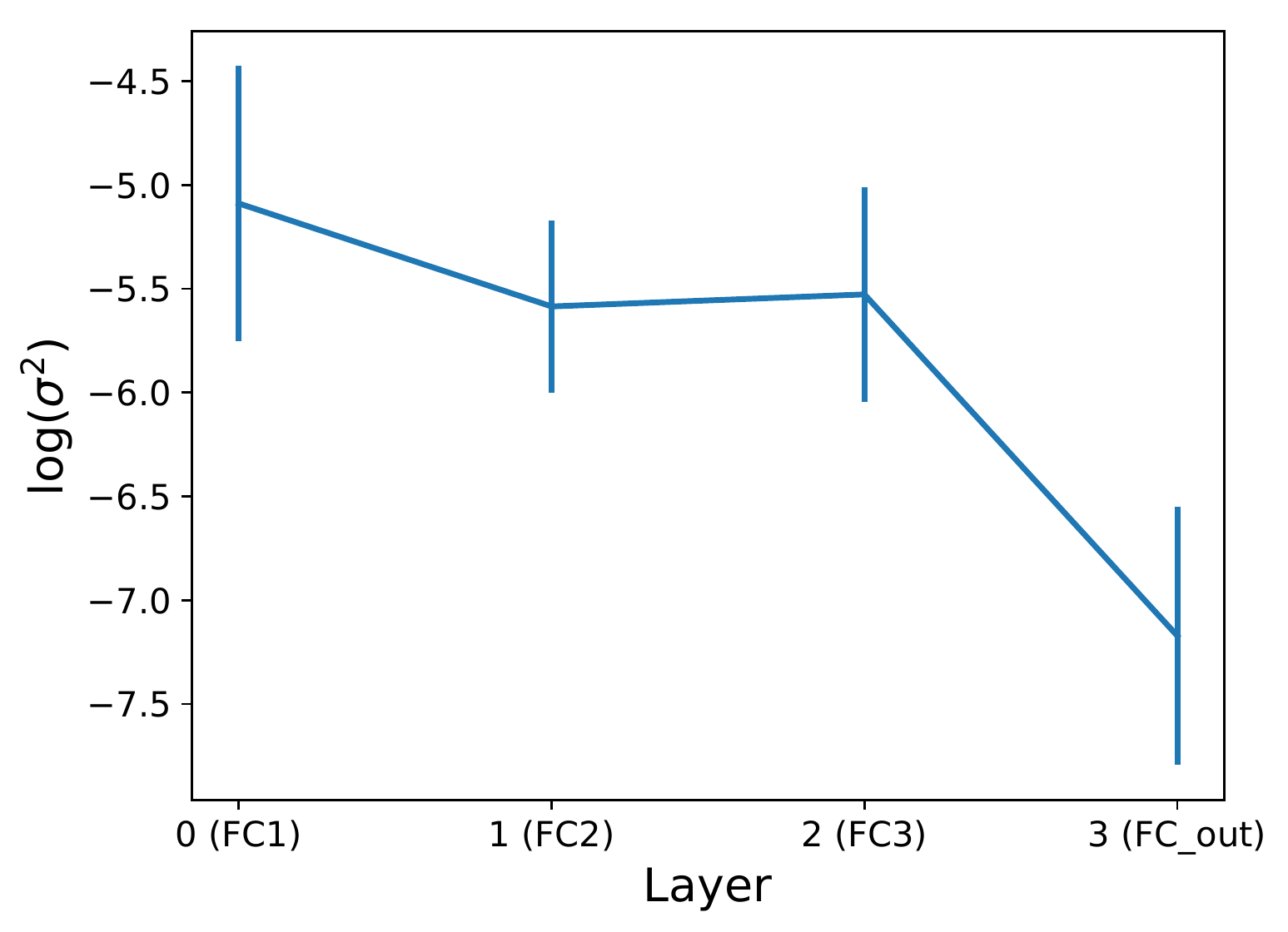}
%		\caption{Permuted pixels experiment (200 swaps)}
}
	\caption{Log weight uncertainty ($\log \pth{\sigma^2}$) in each layer of the learned prior (average $\pm $ STD). Higher value means higher variance/uncertainty.
	 	(a) Permuted labels experiment, (b) Permuted pixels experiment ($200$ swaps).} 
	\label{fig:Log-Var-Analysis}
	\end{center}
	\vskip -0.2in
\end{figure}

%\begin{figure}[ht]
%	\vskip 0.2in
%	\begin{center}
%		\centerline{\includegraphics[width=\columnwidth]{LogVar_PermutedLabels}}
%		\caption{Historical locations and number of accepted papers for International
%			Machine Learning Conferences (ICML 1993 -- ICML 2008) and International
%			Workshops on Machine Learning (ML 1988 -- ML 1992). At the time this figure was
%			produced, the number of accepted papers for ICML 2008 was unknown and instead
%			estimated.}
%		\label{icml-historical}
%	\end{center}
%	\vskip -0.2in
%\end{figure}

%%%%%%%%%%%%%%%%%%%%%%%%%%%%%%%%%%%%%%%%%%%%%%%%%%%%%%%%%%%%%%%%%%%%%%%%%%%

\section{Discussion and Future Work}
We have presented a framework for meta-learning, motivated by extended PAC-Bayes generalization bounds, and implemented through the adjustment of a learned prior, based on tasks encountered so far. The framework bears conceptual similarity to the empirical Bayes method while not being Bayesian, and is implemented at the level of tasks rather than samples (see Section \ref{sect:VariationalBayes} in the supplementary material for details about a Bayesian perspective). Combining the flexibility of the approach, with the rich representational structure of deep neural networks, and learning through gradient based methods leads to an efficient procedure for meta-learning, as motivated theoretically and demonstrated empirically.  
While our experimental results are preliminary, we believe that our work attests to the utility of using rigorous performance bounds to derive learning algorithms, and demonstrates that tighter bounds indeed lead to improved performance.

There are several open issues to consider. First, the current version learns to solve all available tasks in parallel, while a more useful procedure should be sequential in nature. This can be easily incorporated into our framework by updating the prior following each novel task.  Second,  our method requires training stochastic models which is challenging due to the the high-variance gradients. We would like to develop new methods within our framework which have more stable convergence and are easier to apply in larger scale problems. Third, there is much current effort in applying meta-learning ideas to reinforcement learning, for example, \cite{teh2017distral} presents a heuristically motivated framework that is conceptually similar to ours. An interesting challenge would be to extend our techniques to derive meta-learning algorithms for reinforcement learning based on performance bounds. 

	\subsubsection*{Acknowledgments}
    
We thank Asaf Cassel, Guy Tennenholtz, Baruch Epstein, Daniel Soudry, Elad Hoffer and Tom Zahavy for helpful discussions of this work, and the anonymous reviewers for their helpful comment.
We gratefully acknowledge the support of NVIDIA Corporation with the donation of the Titan Xp GPU used for this research. The work was partially supported by the Ollendorff Center of the Viterbi Faculty of Electrical Engineering at the Technion.

% ###########################################################################################################################
% \bibliography{bib_file}
% \bibliographystyle{icml2018}
\putbib
\end{bibunit}

\begin{bibunit} % bibliography for appendix text
%%%%%%%%%%%%%%%%%%%%%%%%%%%%%%%%%%%%%%%%%%%%%%%%%%%%%%%%%%%%%%%%%%%%%%%%%%%%%%%
%%%%%%%%%%%%%%%%%%%%%%%%%%%%%%%%%%%%%%%%%%%%%%%%%%%%%%%%%%%%%%%%%%%%%%%%%%%%%%%
% DELETE THIS PART. DO NOT PLACE CONTENT AFTER THE REFERENCES!
%%%%%%%%%%%%%%%%%%%%%%%%%%%%%%%%%%%%%%%%%%%%%%%%%%%%%%%%%%%%%%%%%%%%%%%%%%%%%%%
%%%%%%%%%%%%%%%%%%%%%%%%%%%%%%%%%%%%%%%%%%%%%%%%%%%%%%%%%%%%%%%%%%%%%%%%%%%%%%%
\newpage
\onecolumn
\appendix
% \section{Do \emph{not} have an appendix here}

% \textbf{\emph{Do not put content after the references.}}
% %
% Put anything that you might normally include after the references in a separate
% supplementary file.

% We recommend that you build supplementary material in a separate document.
% If you must create one PDF and cut it up, please be careful to use a tool that
% doesn't alter the margins, and that doesn't aggressively rewrite the PDF file.
% pdftk usually works fine. 

% \textbf{Please do not use Apple's preview to cut off supplementary material.} In
% previous years it has altered margins, and created headaches at the camera-ready
% stage. 
%%%%%%%%%%%%%%%%%%%%%%%%%%%%%%%%%%%%%%%%%%%%%%%%%%%%%%%%%%%%%%%%%%%%%%%%%%%%%%%
%%%%%%%%%%%%%%%%%%%%%%%%%%%%%%%%%%%%%%%%%%%%%%%%%%%%%%%%%%%%%%%%%%%%%%%%%%%%%%%

	%%%%%%%%%%%%%%%%%%%%%%%%%%%%%%%%%%%%%%%%%%%%%%%%%%%%%%%%%%%%%%%%%%%%%%%%%%%%%%%	
\section{Supplementary Material: Meta-Learning by Adjusting Priors Based on Extended PAC-Bayes Theory}

%%%%%%%%%%%%%%%%%%%%%%%%%%%%%%%%%%%%%%%%%%%%%%%%%%%%%%%%%%%%%%%%%%%%%%%%%%%%%%%	

\subsection{Proof of the Meta-Learning Bound} \label{sect:MetaBoundProof}

In this section we prove Theorem \ref{thm:MetaBound}.
The proof is based on two steps, both use McAllaster's classical PAC-Bayes bound.
In the first step we use it to bound the error which is caused due to observing only a finite number of samples in each of the observed tasks.
In the second step we use it again to bound the generalization error due to observing a limited number of tasks from the environment.

We start by restating the classical PAC-Bayes bound  \citep{mcallester1999pac, shalev2014understanding} 
using general notations.
\begin{Theorem}[Classical PAC-Bayes bound, general notations] \label{thm:McAllasterGeneral}
	Let $\mathcal{X}$ be a sample space and $\mathbb{X}$ some distribution over $\mathcal{X} $, and let $\mathcal{F}$ be a hypothesis space of functions over $\mathcal{X}$.	Define a `loss function'  $g(f,X):\mathcal{F} \times \mathcal{X}  \rightarrow [0,1]$, and let $X_1^K \triangleq \left\{X_1,...,X_K\right\}$  be a sequence of $K$ independent random variables distributed according to $\mathbb{X}$.
	Let $\pi$ be some prior distribution over $\mathcal{F}$ (which must not depend on the samples $X_1,...,X_K$).
	For any $\delta \in (0,1]$, the following bound holds uniformly for all `posterior' distributions  $\rho$ over  $\mathcal{F}$ (even sample dependent),
	\begin{gather}    \label{eq:McTh}
	\mathbb{P}_{X_1^K \isEquivTo{i.i.d} \mathbb{X}}  \bigg\{ 
	\Excpt{X}{\mathbb{X}} \Excpt{f}{\rho} g(f,X) \leq 
	\frac{1}{K}\sum_{k=1}^{K} \Excpt{f}{\rho} g(f,X_k) +   
	\sqrt{\frac{1}{2(K-1)} \pth{\KL{\rho}{\pi} +	\log \frac{K}{\delta}}},  
	\forall \rho \bigg\}  \geq 1-\delta. 
	\end{gather}
\end{Theorem}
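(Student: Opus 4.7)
The plan is to prove Theorem \ref{thm:McAllasterGeneral} by the now-standard three-ingredient McAllester recipe: a concentration bound for each fixed hypothesis, the change-of-measure inequality to lift this uniformly over (data-dependent) posteriors, and Markov's inequality to trade a single $\log(1/\delta)$ against the expectation over the prior.

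First, fix $f \in \mathcal{F}$ and define the deviation
$$\Delta(f) \triangleq \Excpt{X}{\mathbb{X}} g(f,X) - \frac{1}{K}\sum_{k=1}^{K} g(f,X_k).$$
Since $g(f,\cdot) \in [0,1]$ and the $X_k$ are i.i.d., $\Delta(f)$ is the deviation of an average of $K$ independent $[0,1]$-bounded variables from its mean, so a Hoeffding-type argument (in its sharpened form due to Maurer) yields the moment bound
$$\E_{X_1^K \isEquivTo{i.i.d.} \mathbb{X}} \exp\pth{2(K-1)\,\Delta(f)^2} \leq K,$$
uniformly in $f$. Because $\pi$ is independent of the sample, Fubini's theorem gives
$$\E_{X_1^K \isEquivTo{i.i.d.} \mathbb{X}} \Excpt{f}{\pi} \exp\pth{2(K-1)\,\Delta(f)^2} \leq K,$$
and Markov's inequality then implies that, with probability at least $1-\delta$ over the sample,
$$\Excpt{f}{\pi} \exp\pth{2(K-1)\,\Delta(f)^2} \leq \frac{K}{\delta}. \quad (\star)$$

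Second, invoke the Donsker--Varadhan change-of-measure inequality: for any measurable $\phi:\mathcal{F}\to\mathbb{R}$ and any $\rho$ absolutely continuous with respect to $\pi$,
$$\Excpt{f}{\rho}\phi(f) \leq \KL{\rho}{\pi} + \log \Excpt{f}{\pi} e^{\phi(f)}.$$
Applying this with $\phi(f) = 2(K-1)\,\Delta(f)^2$ and substituting $(\star)$ gives, on the same $1-\delta$ event and simultaneously for \emph{every} posterior $\rho$ (including sample-dependent ones),
$$2(K-1)\,\Excpt{f}{\rho} \Delta(f)^2 \leq \KL{\rho}{\pi} + \log \frac{K}{\delta}.$$
Finally, Jensen's inequality gives $\pth{\Excpt{f}{\rho}\Delta(f)}^2 \leq \Excpt{f}{\rho}\Delta(f)^2$; since $\Excpt{f}{\rho}\Delta(f) = \Excpt{X}{\mathbb{X}}\Excpt{f}{\rho} g(f,X) - \frac{1}{K}\sum_{k=1}^{K}\Excpt{f}{\rho} g(f,X_k)$, taking square roots and rearranging yields exactly \eqref{eq:McTh}.

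The routine pieces are Fubini, Markov, Jensen, and the change-of-measure inequality (whose one-line proof uses $\KL{\rho}{\pi} = \Excpt{f}{\rho}\log\frac{d\rho}{d\pi}$ together with $\log\Excpt{f}{\pi} e^{\phi}\!\geq\!\Excpt{f}{\rho}\phi - \Excpt{f}{\rho}\log\frac{d\rho}{d\pi}$). The step that requires genuine care, and which I expect to be the main obstacle, is the sharp moment bound $\E\exp(2(K-1)\Delta(f)^2) \leq K$ — in particular the factor $(K-1)$ in the exponent together with only $K$ on the right. A crude Hoeffding/sub-Gaussian computation gives $\E\exp(2K\Delta^2) \leq 2$; obtaining the tighter constants used here (and in McAllester's original bound) calls for integrating the tail $\Prob\{|\Delta(f)| > t\} \leq 2 e^{-2Kt^2}$ against the density of a $\chi^2$-like variable, or, following Maurer, bounding the moment generating function of $\Delta(f)^2$ via a direct calculation on Bernoulli variables and using convexity to reduce the general $[0,1]$ case to the Bernoulli one.
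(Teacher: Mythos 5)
The paper offers no proof of Theorem \ref{thm:McAllasterGeneral}: it restates McAllester's bound in general notation and points the reader to the two cited references, so there is no internal proof to compare against. What you have written is, in outline, exactly the standard argument from the Shalev-Shwartz--Ben-David textbook (one of the paper's two citations): a per-hypothesis exponential moment bound, Fubini plus Markov to obtain a high-probability statement under the prior $\pi$, the Donsker--Varadhan change of measure to pass uniformly to all sample-dependent posteriors $\rho$, and Jensen to pull the square outside $\Excpt{f}{\rho}$. All of these steps are sound, the uniformity over $\rho$ is handled correctly (the $1-\delta$ event depends only on the sample and $\pi$), and you rightly identify the moment bound as the only place where real work happens.

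Two quantitative corrections on that step. First, the two-sided bound you state, $\E \exp\pth{2(K-1)\Delta(f)^2}\le K$, does not follow from integrating the two-sided tail $\Prob\braces{\abs{\Delta(f)}>t}\le 2e^{-2Kt^2}$; that computation yields $2K-1$. The clean way to get the constant $K$ --- and hence $\log(K/\delta)$ rather than $\log(2K/\delta)$ in \eqref{eq:McTh} --- is to work with the positive part $\Delta^+(f)=\max\braces{\Delta(f),0}$ and integrate the one-sided Hoeffding tail $\Prob\braces{\Delta(f)>t}\le e^{-2Kt^2}$, which gives $\E\exp\pth{2(K-1)(\Delta^+(f))^2}\le 1+(K-1)=K$ exactly; since the theorem asserts only a one-sided inequality, $\Excpt{f}{\rho}\Delta(f)\le\Excpt{f}{\rho}\Delta^+(f)$ and Jensen applied to $(\Delta^+)^2$ finish the argument unchanged. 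Second, your aside that a crude computation gives $\E\exp\pth{2K\Delta^2}\le 2$ is false: $\lambda=2K$ is precisely the critical exponent at which the sub-Gaussian estimate $\E e^{\lambda\Delta^2}\le\pth{1-\lambda/(2K)}^{-1/2}$ diverges, and for Bernoulli samples $\E\exp\pth{2K\Delta^2}$ grows like $\sqrt{K}$. Neither point breaks your proof --- the first is a one-line repair of a step you had already flagged as delicate, and the second is only a parenthetical --- but the moment bound should be stated for $\Delta^+$ if you want the constants exactly as in the theorem.
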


\paragraph{First step}
We use Theorem \ref{thm:McAllasterGeneral} to bound the generalization error in each of the observed tasks when learning is done by an algorithm $Q:\mathcal{Z}^{m_i} \times \mathcal{M} \rightarrow \mathcal{M}$ which uses a prior and the samples to output a distribution over hypotheses.

Let $i \in {1,...,n}$ be the index of some observed task.
We use Theorem \ref{thm:McAllasterGeneral} with the following substitutions.
The samples are $X_k \triangleq z_{i,j}$, $K \triangleq m_i$ ,  and their distribution is $\mathbb{X}  \triangleq  \Dcal_i$.
We define a `tuple hypothesis'$f=(P,h)$ where $P \in \mathcal{M}$ and $h \in \mathcal{H}$.
The `loss function' is the regular loss which uses only the $h$ element in the tuple, $g(f,X)  \triangleq  \ell(h,z)$.
We define the `prior over hypothesis',  $\pi \triangleq (\Pcal, P)$, as some distribution over $\mathcal{M} \times \mathcal{H}$  in which we first sample $P$ from $\Pcal$ and then sample  $h$ from $P$.
According to Theorem \ref{thm:McAllasterGeneral}, the `posterior over hypothesis' can be any distribution (even sample dependent), in particular, the bound will hold for the following family of distributions over $\mathcal{M} \times \mathcal{H}$, $\rho  \triangleq  (\Qcal, Q(S_i,P))$, in which we first sample $P$ from $\Qcal$ and then sample $h$ from $Q=Q(S_i,P)$
\footnote{Recall that $Q(S_i,P)$ is the posterior distribution which is the output of the learning algorithm  $Q()$ which uses the data $S_i$ and the prior $P$.}.

The KL-divergence term is
\begin{align*}
\KL{\rho}{\pi} &= \Excpt{f}{\rho} \log \frac{\rho(f)}{\pi(f)} = 
\Excpt{P}{\Qcal} \Excpt{h}{Q(S,P)} \log \frac{\Qcal(P)Q(S_i,P)(h)}{\Pcal(P) {P(h)}} \\
&=\Excpt{P}{\Qcal}  \log \frac{\Qcal(P)}{\Pcal(P)}  + \Excpt{P}{\Qcal}  \Excpt{h}{Q(S,P)}  \log \frac{Q(S_i,P)(h)}{P(h)}  \\
&=\KL{\Qcal}{\Pcal}  + \Excpt{P}{\Qcal} \KL{Q(S_i,P)}{P} 
\end{align*}

%Our goal is to bound the generalization loss of hypothesis learned from a prior, not just any hypothesis.
%So, we can we can use a `tuple hypothesis' $f = (P, h)$ where $P \in \mathcal{M}$ and $h \in \mathcal{H}$.
%The loss $\ell(h',z) = \ell(h,z)$, i.e, it used only the second element of the tuple.
%We choose $\pi = (\Pcal, P)$ where $P \sim \Pcal$ and $\rho = (\Qcal, Q)$ where $Q \sim Q(S,P), P \sim \Qcal$.
%(recall that $Q(S,P)$ is a deterministic learning algorithm which outputs posteriors).

Plugging in to (\ref{eq:McTh}) we obtain that for any $\delta_i > 0$
\begin{gather}  
\mathbb{P}_{S_i \sim \Dcal_i^m} \bigg\{ 
\Excpt{z}{\Dcal_i} \Excpt{P}{\Qcal}  \Excpt{h}{Q(S_i,P)} \ell(h,z) \leq
\frac{1}{m_i}\sum_{j=1}^{m_i} \Excpt{P}{\Qcal}  \Excpt{h}{Q(S_i,P)} \ell(h,z_{i,j})  \\ \
+ \sqrt{\frac{1}{2(m_i - 1)} \pth{\KL{\Qcal}{\Pcal} + 
		\Excpt{P}{\Qcal}  \KL{Q(S_i,P)}{P} + \log \frac{m_i}{\delta_i}}},
\forall \Qcal \bigg\} 
\geq 1-\delta_i,   \nonumber
\end{gather}
for all observed tasks $i=1,..,n$.

Using the terms in section \ref{sect:SingleTaskDef}, we can write the above as, 
\begin{gather}  \label{eq:IntraTaskLevel}
\mathbb{P}_{S_i \sim \Dcal_i^m} \bigg\{ 
\Excpt{P}{\Qcal} \er[Q(S_i,P), \Dcal_i]  \leq
\Excpt{P}{\Qcal}  \erhat[Q(S_i,P), S_i]  \\ \
+  \sqrt{\frac{1}{2(m_i-1)} \pth{\KL{\Qcal}{\Pcal} + 
		\Excpt{P}{\Qcal}  \KL{Q(S_i,P)}{P} + \log \frac{m_i}{\delta_i}}},
\forall \Qcal \bigg\} 
\geq 1-\delta_i,   \nonumber
\end{gather}

\paragraph{Second step} 
Next we wish to bound the environment-level generalization (i.e, the error due to observing only a finite number of tasks from the environment).
We will use Theorem \ref{thm:McAllasterGeneral} again, with the following substitutions.
The  i.i.d.~samples are  $(\Dcal_i, m_i, S_i), i=1,...,n$  where $(\Dcal_i, m_i)$  are distributed according to the task-distribution $\tau$ and $S_i \sim \Dcal_i^{m_i}$. 
The `hypotheses' are $f \triangleq P$ and the `loss function' is $g(f,X)  \triangleq \Excpt{h}{Q(S,P)} \Excpt{z}{\Dcal}\ell(h,z)$.
Let $\pi \triangleq \Pcal$ be some distribution over $\mathcal{M}$, the bound will hold uniformly for all distributions $\rho \triangleq \Qcal$ over $\mathcal{M}$.

For any $\delta_0 > 0$, the following holds (according to Theorem \ref{thm:McAllasterGeneral}), 
\begin{gather}   
\mathbb{P}_{(\Dcal_i, m_i) \sim \tau,S_i \sim \Dcal_i^{m_i}, i=1,..,n}  \bigg\{  
\Excpt{(\Dcal, m)}{\tau}  \Excpt{S}{\Dcal^m}   \Excpt{P}{\Qcal}  \Excpt{h}{Q(S,P)} \Excpt{z}{\Dcal}\ell(h,z)  \leq\\  
\frac{1}{n}\sum_{i=1}^{n}  \Excpt{P}{\Qcal}    \Excpt{h}{Q(S_i,P)} \Excpt{z}{\Dcal_i}\ell(h,z) + \nonumber
\sqrt{\frac{1}{2(n-1)} \pth{\KL{\Qcal}{\Pcal} + \log \frac{n}{\delta_0}}},
\forall \Qcal \bigg\}  \geq 1-\delta_0.  \nonumber 
\end{gather}
Using the terms in section \ref{sect:MetaProblemDef}, we can write the above as,
\begin{gather}   \label{eq:InterTaskLevel}
\mathbb{P}_{(\Dcal_i, m_i) \sim \tau, S_i \sim  \Dcal_i^{m_i},  i=1,..,n}  \bigg\{  
\er[\Qcal,\tau] \leq 
\Excpt{P}{\Qcal} \frac{1}{n}\sum_{i=1}^{n}    \er[Q(S_i,P), \Dcal_i]  \\
+ \sqrt{\frac{1}{2(n-1)} \pth{\KL{\Qcal}{\Pcal} + \log \frac{n}{\delta_0}}}, 
\forall \Qcal \bigg\}  \geq 1-\delta_0.  \nonumber 
\end{gather}

Finally, we will bound the probability of the event which is the intersection of the events in (\ref{eq:IntraTaskLevel}) and (\ref{eq:InterTaskLevel}) by using the union bound.
For any $\delta > 0$, set $\delta_0 \triangleq \frac{\delta}{2}$ and $\delta_i \triangleq \frac{\delta}{2n}$ for $i=1,...,n$.

Using a union bound argument (Lemma \ref{Lem:Union}) we finally get, 

\begin{gather*} 
\mathbb{P}_{(\Dcal_i, m_i) \sim\tau, S_i \sim  \Dcal_i^{m_i},i=1,...,n} \bigg\{ \er[\Qcal, \tau] \leq
\frac{1}{n} \sum_{i=1}^{n} \Excpt{P}{\Qcal}  \erhati{Q_i(S_i,P),S_i}{i}  \nonumber \\
+ \frac{1}{n} \sum_{i=1}^{n}  \sqrt{\frac{1}{2(m_i-1)} \pth{\KL{\Qcal}{\Pcal} + 
		\Excpt{P}{\Qcal}  \KL{Q(S_i,P)}{P} + \log \frac{2  n m_i}{\delta}}}  \nonumber \\
+ \sqrt{\frac{1}{2(n-1)} \pth{\KL{\Qcal}{\Pcal} + \log \frac{2n}{\delta}}} 
, \forall \Qcal \bigg\} \geq 1 - \delta \nonumber.
\end{gather*}
%	Using the inequality $\sqrt{a+b} \leq \sqrt{a} + \sqrt{b}$ we can get a more elegant bound,

%%%%%%%%%%%%%%%%%%%%%%%%%%%%%%%%%%%%%%%%%%%%%%%%%%%%%%%%%%%%%%%%%%%%%%%%%%%%%%%	
\subsection{Meta-Learning Bound Based on Alternative Single-Task Bounds}  \label{sec:AlternativeBounds}
Many PAC-Bayesian bounds for single-task learning  have appeared in the literature.
In this section we demonstrate how our proof technique can be used with a different single-task bound to derive a possibly tighter meta-learning bound.

Consider the following single-task bound by \citep{seeger2002pac, maurer2004note}. 
{\footnote{Note that we used the slightly tighter version version  by \citet{maurer2004note} bound which  requires $K \geq 8$ .}}

\begin{Theorem}[Seeger's single-task bound]   \label{thm:SeegerBound}
	Under the same notations as Theorem \ref{thm:McAllasterGeneral}, for any $\delta \in (0,1]$ we have,
	\begin{gather}    \label{eq:SeegerTh}
	\mathbb{P}_{X_1,...,X_K \isEquivTo{i.i.d} \mathbb{X}}  \bigg\{ 
	\Excpt{X}{\mathbb{X}} \Excpt{f}{\rho} g(f,X) \leq 
	\erhat[\rho , X_1^K] + 
	2 \varepsilon + \sqrt{2 \varepsilon \erhat[\rho , X_1^K]},
	\forall \rho \bigg\}  \geq 1-\delta, \nonumber
	\end{gather}
	where we define,	
	\begin{equation*}
	\varepsilon(K,\rho,\pi,\delta) \triangleq \frac{1}{K} \pth{\KL{\rho}{\pi} + \log \frac{2\sqrt{K}}{\delta}} ,
	\end{equation*}
	and, 
	\begin{equation*}
	\erhat[\rho , X_1^K] \triangleq \frac{1}{K}\sum_{k=1}^{K} \Excpt{f}{\rho} g(f,X_k).
	\end{equation*}
\end{Theorem}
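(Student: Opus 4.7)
The plan is to follow the standard PAC-Bayes template but with Maurer's tighter moment bound on the binary KL divergence, and then to invert the resulting Bernoulli-KL inequality by elementary algebra. Write $q = \Excpt{X}{\mathbb{X}} \Excpt{f}{\rho} g(f,X)$ and $p = \erhat[\rho, X_1^K]$, and for $a,b \in [0,1]$ let $d(a\|b) := a\log(a/b) + (1-a)\log((1-a)/(1-b))$ denote the Bernoulli KL divergence.

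First I would invoke the Donsker--Varadhan change-of-measure inequality: for any measurable $\phi: \mathcal{F} \to \mathbb{R}$,
\[
\Excpt{f}{\rho} \phi(f) \leq \KL{\rho}{\pi} + \log \Excpt{f}{\pi} e^{\phi(f)}.
\]
I apply this with $\phi(f) = K \cdot d\pth{\tfrac{1}{K}\sum_k g(f,X_k) \,\big\|\, \Excpt{X}{\mathbb{X}} g(f,X)}$, and then use the joint convexity of $d(\cdot\|\cdot)$ together with Jensen's inequality to pull the $\rho$-expectation inside both arguments of $d$. This yields
\[
K\, d(p\|q) \;\leq\; \KL{\rho}{\pi} + \log \Excpt{f}{\pi} e^{K\, d\left(\tfrac{1}{K}\sum_k g(f,X_k)\,\|\,\Excpt{X}{\mathbb{X}} g(f,X)\right)}.
\]

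Second, I would invoke Maurer's refined moment bound: for any fixed $f$ with $g(f,\cdot) \in [0,1]$ and any $K \geq 8$,
\[
\Excpt{X_1^K}{\mathbb{X}^K} e^{K\, d\left(\tfrac{1}{K}\sum_k g(f,X_k)\,\|\,\Excpt{X}{\mathbb{X}} g(f,X)\right)} \;\leq\; 2\sqrt{K}.
\]
Since $\pi$ is data-independent, Fubini allows swapping the $\pi$- and $\mathbb{X}^K$-expectations on the right-hand side above. Applying Markov's inequality to the resulting non-negative random variable then gives: with probability at least $1-\delta$ over $X_1^K \sim \mathbb{X}^K$, uniformly over all $\rho$,
\[
d(p\|q) \;\leq\; \frac{1}{K}\pth{\KL{\rho}{\pi} + \log \frac{2\sqrt{K}}{\delta}} \;=\; \varepsilon.
\]

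Finally I would invert this Bernoulli-KL bound into the stated form. If $q \leq p$, the claim is trivial. If $q > p$, a refined Pinsker-type inequality $d(p\|q) \geq (q-p)^2/(2q)$ (which is sharp near $q=0$) together with $d(p\|q) \leq \varepsilon$ gives $q - p \leq \sqrt{2q\varepsilon}$. Viewing this as a quadratic inequality in $\sqrt{q}$, completing the square, and using $\sqrt{a+b} \leq \sqrt{a}+\sqrt{b}$ yields $q \leq p + 2\varepsilon + \sqrt{2\varepsilon p}$, which is exactly the stated bound. The main obstacle I anticipate is justifying Maurer's moment bound $\E\, e^{K d(\erhat\|\er)} \leq 2\sqrt{K}$: this is the technical heart of the refinement over McAllester's factor of $K$, and it requires careful Stirling-type estimates on the moment generating function of a Bernoulli empirical-mean-deviation variable. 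The change-of-measure step and the final algebraic inversion are, by comparison, routine.
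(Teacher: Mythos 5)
Your proposal is correct, but it is worth noting that the paper does not actually prove Theorem \ref{thm:SeegerBound} at all: it imports the statement wholesale from the literature, citing Seeger (2002) for the bound and Maurer (2004) for the sharpened moment constant $2\sqrt{K}$ (with the accompanying footnote that this version requires $K \geq 8$). What you have written is essentially a faithful reconstruction of that cited proof: Donsker--Varadhan change of measure applied to $\phi(f) = K\, d(\cdot\|\cdot)$, joint convexity of the binary KL plus Jensen to pass to $d(p\|q)$, Maurer's bound $\E\, e^{K d(\hat{p}\|p)} \leq 2\sqrt{K}$ combined with Fubini and Markov to control the log-moment term uniformly over $\rho$, and finally the refined-Pinsker inversion $d(p\|q) \geq (q-p)^2/(2q)$ for $q > p$, solved as a quadratic in $\sqrt{q}$ to obtain $q \leq p + 2\varepsilon + \sqrt{2\varepsilon p}$. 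Each of these steps is sound, and your algebra in the inversion checks out (including the use of $\sqrt{a+b}\leq\sqrt{a}+\sqrt{b}$, which is where the additive $2\varepsilon$ rather than $\varepsilon$ comes from). The one piece you leave as a black box --- Maurer's moment bound --- is precisely the content of the reference the paper leans on, and you correctly identify it as the technical heart of the improvement over McAllester's factor of $K+1$; proving it from scratch would require the binomial/Stirling estimates you mention, and neither you nor the paper supplies them. So relative to the paper your proposal adds a proof where none is given; relative to the literature it is the standard argument. No gap beyond the explicitly acknowledged reliance on Maurer's lemma.
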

Using the above theorem we get an alternative intra-task bound to (\ref{eq:IntraTaskLevel}),
\begin{gather}   \label{SeegerIntraTask}
\mathbb{P}_{S_i \sim \Dcal_i^m} \bigg\{ 
\Excpt{P}{\Qcal} \er[Q(S_i,P), \Dcal_i]  \leq
\Excpt{P}{\Qcal}  \erhat[Q(S_i,P), S_i] \\
+ 2 \varepsilon_i + \sqrt{2 \varepsilon_i \erhat[Q(S_i,P), S_i] } 
, \forall \Qcal \bigg\} 
\geq 1-\delta_i,   \nonumber
\end{gather}
where,
\begin{equation*}
\varepsilon_i \triangleq \frac{1}{m_i} \pth{\KL{\Qcal}{\Pcal} + \Excpt{P}{\Qcal}  \KL{Q(S_i,P)}{P} + \log \frac{2\sqrt{m_i}}{\delta_i}} .
\end{equation*}

While the classical bound of Theorem \ref{thm:OriginalPacBayes} converges at a rate of $O(1/\sqrt{m})$ (as in basic VC-like bounds), the bound of Theorem \ref{thm:SeegerBound} converges faster (at a rate of  $O(1/m)$) if the empirical error $\erhat[Q]$ is negligibly small (compared to $\KL{Q}{P}/m$).
Since this is commonly the case in modern deep learning, we expect this bound to be tighter than others in this regime.

By utilizing the Theorem \ref{thm:SeegerBound} in the first step of the proof in section \ref{sect:MetaBoundProof} we can get a tighter bound for meta-learning:
\begin{gather}  \label{eq:MetaBoundSeeger}
\mathbb{P}_{(\Dcal_i, m_i) \sim\tau, S_i \sim  \Dcal_i^{m_i},i=1,...,n} \bigg\{  \er[\Qcal, \tau] \leq 
\frac{1}{n} \sum_{i=1}^{n} \bigg[
\Excpt{P}{\Qcal}  \erhati{Q_i(S_i,P),S_i}{i}    \\
+ 2 \varepsilon_i + \sqrt{2 \varepsilon_i \erhat[Q(S_i,P), S_i]} \bigg]  \nonumber 
+ \sqrt{\frac{1}{2(n-1)} \pth{\KL{\Qcal}{\Pcal} + \log \frac{2n}{\delta}}} 
, \forall \Qcal \bigg\} \geq 1 - \delta \nonumber, 
\end{gather}
where, $\varepsilon_i$ is defined in (\ref{SeegerIntraTask}) (and $\delta_i \triangleq \frac{\delta}{2n}$).

Finally we note that more recent works presented possibly tighter PAC-Bayes bounds by taking into account the empirical variance \citep{tolstikhin2013pac}, or by specializing the bound to deep  neural networks \citep{neyshabur2017pac} or by using more general divergences than the KL divergence \citep{alquier2018simpler}. However, we leave the incorporation of these bounds for future work.
%%%%%%%%%%%%%%%%%%%%%%%%%%%%%%%%%%%%%%%%%%%%%%%%%%%%%%%%%%%%%%%%%%%%%%%%%%%%%%%	

%%%%%%%%%%%%%%%%%%%%%%%%%%%%%%%%%%%%%%%%%%%%%%%%%%%%%%%%%%%%%%%%%%%%%%%%%%%%%%%	
\subsection{Hierarchical Variational Bayes} \label{sect:VariationalBayes}
In this section we show how  the variational inference method, used in a hierarchical Bayesian framework, can lead to a learning objective similar to the one obtained using PAC-Bayesian analysis. While the material here is not new (see, for example, \citep{blei2003latent, zhang2008flexible, edwards2016towards}), we present it for completeness.	
In the Bayesian framework one assume a probabilistic model with unknown (latent) variables, but with known prior distribution.
Given the observed data,  the aim  is to infer the posterior distribution over those variables using Bayes rule.
However, obtaining the posterior is often intractable.  Variational methods solve this problem by finding an approximate posterior.

In our case we observe the data sets of $n$ tasks $S_1,..,S_n$. Each $S_i$ is composed of $m_i$ samples $S_i=\braces{z_1,...,z_{m_i}}$.
As common in Hierarchal Bayesian methods  \citep{blei2003latent, zhang2008flexible, edwards2016towards} we assume a hierarchical model with shared random variable $\psi$  and task-specific random variables  $w_i,i=1,...,n$  (see Figure \ref{fig:graphical_model}). 

We make the following assumptions:
\begin{itemize}
	%	\item We observe $n$ data sets $S_1,..,S_n$, each composed of $m_i$ samples
	%	 $S=\braces{z_1,...,z_{m_i}}$.
	%	\item The model unknown parameters are $\psi$ and  $w_i,i=1,...,n$. 
	\item Known prior distribution over $\psi$,  $\Pcal(\psi)$.
	\item  Given  $\psi$, the pairs $\braces{(w_i,S_i),i=1,...,n}$ are mutually independent.
	%	\item  Given  $P$ and  $S_i$,   $w_i$ is independent of $S_{\neq i}$.
	%	\item  Given  $P$, the data sets $S_1,...,S_n$ are independent.
	\item $S_i$  is independent of $\psi$ given $w_i$, i.e,  $p(S_i|w_i, \psi) = p(S_i|w_i)$.
	\item Given $w_i$, the samples  $S_i=\braces{z_1,...,z_{m_i}}$ are independent, i.e, $p(S_i|w_i) = \prod_{z\in S_i} p(z|w_i)$.
	\item  Known likelihood function $p(z|w_i)$
	\footnote{The log-likelihood is analogous to the loss function PAC-Bayesian analysis \citep{germain2016pac}.}.
	\item  Known prior distribution over $w_i$ conditioned on $\psi$, $p(w_i| \psi)$.
	%	\item Abusing notations, the prior distribution of $w_i$ given $P$ is $p(w_i|P) = P(w_i)$.
\end{itemize}

\begin{figure}
	\centering
	\includegraphics[width=0.3\textwidth]{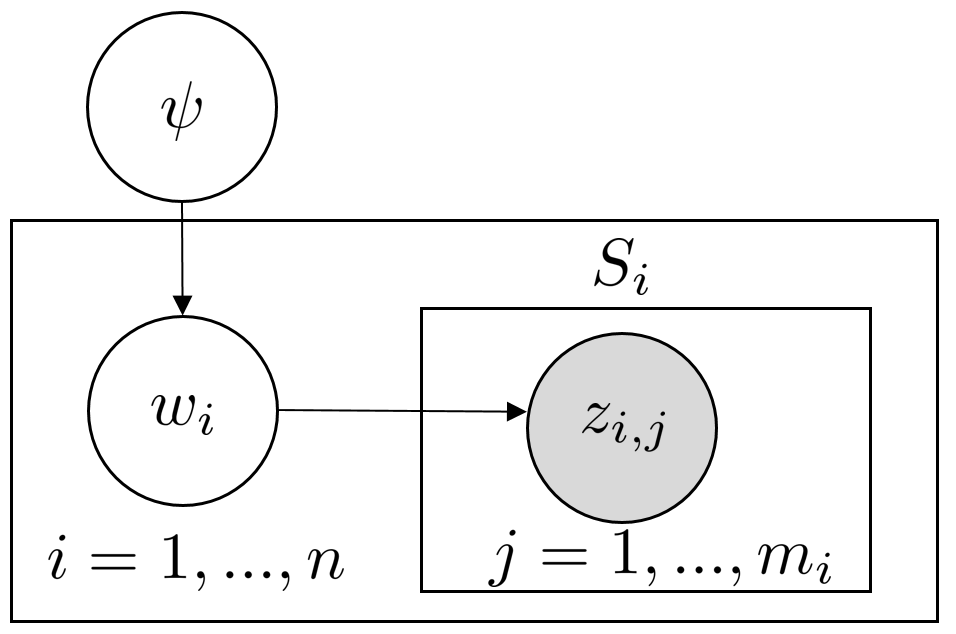}
	\caption{Graphical model of the framework: the \textit{circle} nodes denote random variables, \textit{shaded} nodes denote observed variables and \textit{plates} indicate replication.} 	
	\label{fig:graphical_model}
\end{figure}

The posterior over the latent variables can be written as 
\begin{gather}
p(\psi,w_1,...,w_n|S_1,..,S_n)	=  p(\psi|S_1,..,S_n ) p(w_1,...,w_n|\psi, S_1,..,S_n) \nonumber  \\
=  p(\psi|S_1,..,S_n ) \prod_{i=1}^{n} p(w_i|\psi, S_i), \label{eq:SimplePosterior}
\end{gather}
where the first equality stems from the conditional probability definition and the second equality from the conditional independence assumption.

Using Bayes rule and  the assumptions we have,
\begin{gather}
p(\psi|S_1,..,S_n ) = \frac{  p(S_1,..,S_n | \psi) \Pcal(\psi)}{p(S_1,..,S_n)}  = \frac{\prod_{i=1}^{n} p(S_i|\psi)\Pcal(\psi)}{p(S_1,..,S_n)}, \label{eq:P_posterior}
\end{gather}
\begin{gather}
p(w_i|\psi, S_i) = \frac{p(S_i| w_i, \psi)  p(w_i| \psi) }{p(S_i|\psi)}. \label{eq:w_posterior}
\end{gather}

Obtaining the exact posterior is intractable.
Instead, we will obtain an approximate solution using the following  family of distributions,
\begin{gather}
q(\psi, w_1,...,w_n) =  \Qcal_\theta(\psi)  \prod_{i=1}^{n} Q_{\phi_i}(w_i), \label{eq:VariationlPosterior}
\end{gather}
where $_\theta$ and $\phi_i$ are unknown parameters.
To obtain the best approximation we will solve the following optimization problem
\begin{gather*}
\argmin_{\theta, \phi_1,...,\phi_n} \KL{q(\psi,w_1,...,w_n)}{p(\psi,w_1,...,w_n|S_1,..,S_n)}.
\end{gather*}
Using \refeq{eq:VariationlPosterior} and \refeq{eq:SimplePosterior}, the optimization problem can be reformulated in an equivalent form
\begin{align*}
&\argmin_{\theta, \phi_1,...,\phi_n} \KL{\Qcal_\theta(\psi) \prod_{i=1}^{n} Q_{\phi_i}(w_i) }{p(\psi|S_1,..,S_n) \prod_{i=1}^{n} p(w_i|\psi, S_i)} \\
=&\argmin_{\theta, \phi_1,...,\phi_n}  \Excpt{\psi}{\Qcal_\theta} \Excpt{w_i}{Q_{\phi_i},i=1,..,n}
\brk{ \log \Qcal_\theta(\psi)  + \sum_{i=1}^{n} \log  Q_{\phi_i}(w_i) - \log p(\psi|S_1,..,S_n) -  \sum_{i=1}^{n} \log  p(w_i|\psi, S_i)}.
\end{align*}
Plugging \refeq{eq:P_posterior} we get
\begin{gather*}
\argmin_{\theta, \phi_1,...,\phi_n}  \Excpt{\psi}{\Qcal_\theta} \Excpt{w_i}{Q_{\phi_i},i=1,..,n}
\log \Qcal_\theta(\psi)  + \sum_{i=1}^{n} \log  Q_{\phi_i}(w_i) - \log \frac{\prod_{i=1}^{n} p(S_i|\psi)\Pcal(\psi)}{p(S_1,..,S_n)} -  \sum_{i=1}^{n}  \log  p(w_i|\psi, S_i) .
\end{gather*}
Rearranging and omitting terms independent of the optimization parameters  we get
\begin{align*}
&\argmin_{\theta, \phi_1,...,\phi_n}  \Excpt{\psi}{\Qcal_\theta}  \log \frac{\Qcal_\theta(\psi)}{\Pcal(\psi)} +
\Excpt{\psi}{\Qcal_\theta}   \sum_{i=1}^{n} \brk{ \Excpt{w_i}{Q_{\phi_i}}  \log  \frac {Q_{\phi_i}(w_i)}{p(w_i|\psi, S_i)} -\log p(S_i|\psi)} \\
=&\argmin_{\theta, \phi_1,...,\phi_n}  \KL{\Qcal_\theta}{\Pcal}  +
\Excpt{\psi}{\Qcal_\theta}   \sum_{i=1}^{n} \brk{ \Excpt{w_i}{Q_{\phi_i}}  \log  \frac {Q_{\phi_i}(w_i)}{p(w_i|\psi, S_i)} -\log p(S_i|\psi)}.
\end{align*}
Using \refeq{eq:w_posterior} we can re-write the term inside the sum 
\begin{align*}
&\Excpt{w_i}{Q_{\phi_i}}  \braces{\log  {Q_{\phi_i}(w_i)} - \log{p(w_i|\psi, S_i)}} -\log p(S_i| \psi) \\ 
=&\Excpt{w_i}{Q_{\phi_i}}  \braces{\log  {Q_{\phi_i}(w_i)}  - \log p(S_i| w_i, \psi)  - \log p(w_i|  \psi) + \log p(S_i| \psi)} - \log p(S_i| \psi) \\
=&\KL{Q_{\phi_i}}{p(w_i| P) } - \log p(S_i| w_i, \psi) .
\end{align*}
According to the assumptions we have $p(S_i| w_i, \psi) =  p(S_i| w_i) = \prod_{z \in S_i} p(z | w_i)$.
Finally we can write a simpler form for the optimization objective
\begin{equation} \label{eq:VariationalObjective}
\argmin_{\theta, \phi_1,...,\phi_n}  \Excpt{\psi}{\Qcal_\theta}  \sum_{i=1}^{n}\brk{\Excpt{w_i}{Q_{\phi_i}} 	\sum_{z \in S_i}-\log  p(z | w_i)  + \KL{Q_{\phi_i}}{p(w_i|\psi)}}    + \KL{\Qcal_\theta}{\Pcal}.
\end{equation}
The resulting learning objective is similar to the meta-learning generalization bound develop in our work, and indeed the experimental results are similar (see section \ref{sect:Results}). However, our algorithm is derived from a bound and is not formulated within a Bayesian framework.

%%%%%%%%%%%%%%%%%%%%%%%%%%%%%%%%%%%%%%%%%%%%%%%%%%%%%%%%%%%%%%%%%%%%%%%%%%%%%%%	

\subsection{Pseudo Code}  \label{sect:code}

\begin{algorithm}[H] 
	\caption{MLAP algorithm, meta-training phase (learning-to-learn)}
	\label{algorithm:MetaTrain}
	\begin{algorithmic}
	\STATE {\bfseries Input:} {Data sets  of observed tasks: $S_1,...,S_n$.}
	\STATE {\bfseries Output:} {Learned prior parameters $\theta$.}
	\STATE {\bfseries Initialize:}\\
		\STATE  $\theta=\pth{\mu_P,\rho_P}\in \mathbb{R}^d \times\mathbb{R}^d $.
		\STATE  $\phi_i=\pth{\mu_i,\rho_i}\in \mathbb{R}^d \times\mathbb{R}^d, \quad \mathrm{for} \quad  i=1,...,n $.
	\WHILE {\textit{not done} }
		\FOR{each task  $i \in \braces{1,..n} \footnotemark$}
			\STATE Sample a random mini-batch from the data $S'_i \subset S_i$. \\
			\STATE Approximate  $J_i(\theta, \phi_i)$ (\ref{eq:TaskObj}) using $S'_i$ and averaging Monte-Carlo draws.\\  
		\ENDFOR 
		\STATE $J \leftarrow \frac{1}{n}\sum_{i \in \braces{1,..n}} J_i(\theta, \phi_i)  + \Upsilon(\theta)$.\\
		\STATE Evaluate the gradient of $J$ w.r.t $\braces{\theta, \phi_1,...,\phi_n}$ using backpropagation.\\
		\STATE Take an optimization step.
	\ENDWHILE
	\end{algorithmic}
\end{algorithm}

\footnotetext{For implementation considerations, when training with a large number of tasks we can sample a subset of tasks in each iteration (``meta min-batch" )  to estimate $J$.}

\begin{algorithm}[H]
	\caption{MLAP algorithm, meta-testing phase (learning a new task).}
	\label{algorithm:MetaTest}
	\begin{algorithmic}
	\STATE {\bfseries Input:} {Data set of a new task, $S$, and prior parameters, $\theta$.}
	\STATE {\bfseries Output:} {Posterior parameters $\phi'$ which solve the new task.}
	\STATE {\bfseries Initialize:}\\
		\STATE  $\phi' \leftarrow \theta$.
	\WHILE {\textit{not done}}
		\STATE Sample a random mini-batch from the data $S' \subset S$. \\
		\STATE Approximate the empirical loss $J$ (\ref{eq:TaskObj}) using $S'$ and averaging Monte-Carlo draws.	\\
		\STATE Evaluate the gradient of $J$ w.r.t $\phi'$ using backpropagation.\\
		\STATE Take an optimization step.
	\ENDWHILE	
	\end{algorithmic}
\end{algorithm}

%%%%%%%%%%%%%%%%%%%%%%%%%%%%%%%%%%%%%%%%%%%%%%%%%%%%%%%%%%%%%%%%%%%%%%%%%%%%%%%	
\subsection{Classification Example Implementation Details} \label{sect:ImplemetDetails}

The network architecture used for the permuted-labels experiment is a small CNN with $2$ convolutional-layers of $10$ and $20$ filters, each with $5 \times 5$ kernels, a hidden linear layer with $50$ units and a linear output layer. Each convolutional layer is followed by max pooling operation with kernel of size 2. Dropout with $p=0.5$ is performed before the output layer.
In both networks we use ELU \citep{clevert2015fast} (with $\alpha=1$) as an activation function.
Both phases of the MLAP algorithm (algorithms \ref{algorithm:MetaTrain} and \ref{algorithm:MetaTest}) ran for $200$ epochs, with batches of $128$ samples in each task.
We take only one Monte-Carlo sample of the stochastic network output in each step.
As optimizer we used ADAM  \citep{kingma2014adam} with learning rate of $10^{-3}$.
The means of the weights ($\mu$ parameters) are initialized randomly with the Glorot method \citep{glorot2010understanding}, while the log-var of the weights ($\rho$ parameters) are initialized by $\normal{-10,0.1^2}$. 
The hyper-prior and hyper-posterior parameters are $\kappa_\Pcal=2000$ and $\kappa_\Qcal=0.001$ respectively and the confidence parameter  was chosen to be $\delta=0.1$ . To evaluate the trained network we used the maximum of the posterior for inference (i.e.\ we use only the means the weights)
\footnote{Classifying using the the majority vote of several runs gave similar results in this experiment.}.

\paragraph{MAML implementation details}
We report the best results obtained with all combinations of the following representative hyper-parameters: 
$1$-$3$ gradient steps in meta-training,  $1$-$20$ gradient steps in meta-testing, $300$ iterations and $\alpha \in \braces{0.01,0.1,0.4}$.
The best results for MAML were obtained $\alpha=0.01$, $2$ gradient steps in meta-training and $18$ in meta-testing.
%%%%%%%%%%%%%%%%%%%%%%%%%%%%%%%%%%%%%%%%%%%%%%%%%%%%%%%%%%%%%%%%%%%%%%%%%%%%%%%

%%%%%%%%%%%%%%%%%%%%%%%%%%%%%%%%%%%%%%%%%%%%%%%%%%%%%%%%%%%%%%%%%%%%%%%%%%%%%%%	

\subsection{Visual Illustration in a Toy Example} \label{sect:ToyExample}

To illustrate the setup visually, we will consider a simple toy example of a 2D estimation problem. In each task, the goal is to estimate the mean of the data generating distribution.
In this setup, the samples $z$ are vectors in $\mathbb{R}^2$. The hypothesis class is a the set of 2D vectors, $h \in \mathbb{R}^2$.
As a loss function we will use the Euclidean distance, $\ell(h,z) \triangleq \norm{h-z}_2^2$.
We artificially create the data of each task by generating 50 samples from the appropriate distribution: $\normal{(2,1)^{\top},0.1^2 I_{2\times 2}}$ in task 1, and $\normal{(4,1)^{\top},0.1^2 I_{2\times 2}}$ in task 2.
The prior and posteriors are 2D factorized Gaussian distributions, $P \triangleq \normal{\mu_P, \diag(\sigma_P^2)}$ and $Q_i \triangleq \normal{\mu_i, \diag(\sigma_i^2)}, i=1,2$.

We run Algorithm \ref{algorithm:MetaTrain} (meta-training) with complexity terms according to Theorem \ref{thm:OriginalPacBayes}. As seen in Figure \ref{fig:Toy},  the learned prior (namely, the prior learned from the two tasks) and single-task posteriors can be understood intuitively. First, the posteriors are located close to the ground truth means of each task, with relatively small uncertainty covariance. 
Second, the learned prior is located in the middle between the two posteriors, and its covariance is larger in the first dimension.
This is intuitively reasonable since the prior learned that tasks are likely to have values of around 1 in  dimension 2 and values around 3 in the dimension 1, but with larger variance. Thus, new similar tasks can be learned using this prior with fewer samples.

\begin{figure}[H]
\vskip 0.2in
\begin{center}
	%\framebox[4.0in]{$\;$}
	%		\fbox{\rule[-.5cm]{0cm}{4cm} \rule[-.5cm]{4cm}{0cm}}
	\includegraphics[width=0.4\textwidth]{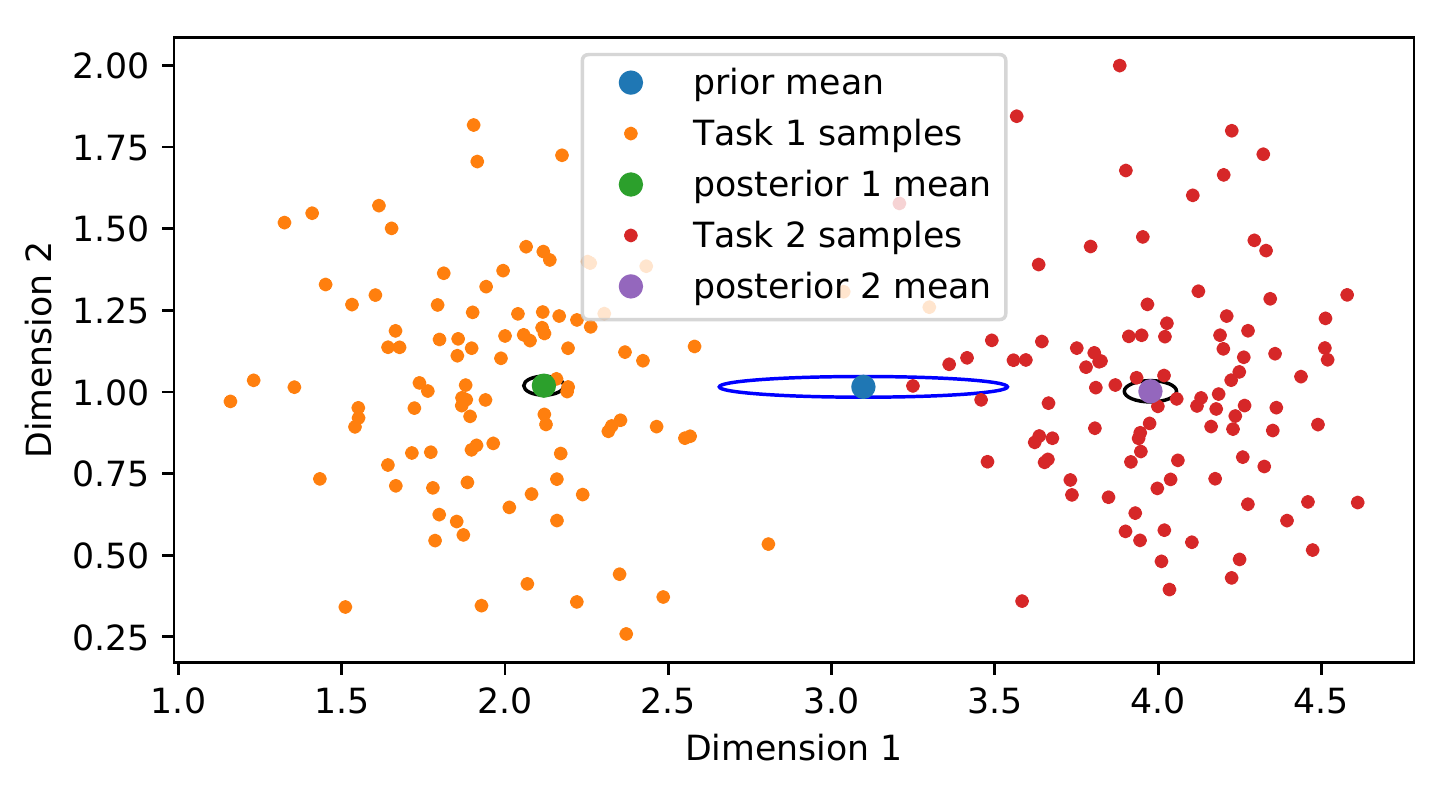}
	
	\caption{\textbf{Toy example:} 	the orange and red dots are the samples of task $1$ and $2$, respectively, and  the green and purple dots are the means of the posteriors of task $1$ and $2$, respectively. 
		The mean of the prior is a blue dot.	
		The ellipse around each distribution's mean represents the covariance matrix.
	}
\vskip -0.2in
\end{center}
\label{fig:Toy}
\end{figure}

%%%%%%%%%%%%%%%%%%%%%%%%%%%%%%%%%%%%%%%%%%%%%%%%%%%%%%%%%%%%%%%%%%%%%%%%%%%%%%%	

\subsection{Technical Lemmas}

	\begin{Lemma} \label{Lem:Union}
		Let $\braces{E_i}_{i=1}^n$ be a set of events, which satisfy $\Prob(E_i) \geq 1- \delta_i$, with some $\delta_i \geq 0, i=1,...,n$.
		Then, $\Prob(\Intersect{i=1}{n} E_i) \geq 1 - \sum_{i=1}^{n} \delta_i$.
	\end{Lemma}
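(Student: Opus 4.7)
The plan is to prove this by passing to complements and invoking Boole's inequality (the countable subadditivity of probability measures). Concretely, I would set $E_i^c$ for the complement of $E_i$ and observe that by De Morgan's law, $\left(\bigcap_{i=1}^n E_i\right)^c = \bigcup_{i=1}^n E_i^c$, so the claim reduces to showing $\Prob\left(\bigcup_{i=1}^n E_i^c\right) \leq \sum_{i=1}^n \delta_i$.

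The first step is to note that $\Prob(E_i) \geq 1 - \delta_i$ implies $\Prob(E_i^c) \leq \delta_i$ for each $i$. The second step is to apply the standard union bound $\Prob\left(\bigcup_{i=1}^n E_i^c\right) \leq \sum_{i=1}^n \Prob(E_i^c)$, which follows by induction on $n$ from the inclusion-exclusion-derived identity $\Prob(A \cup B) = \Prob(A) + \Prob(B) - \Prob(A \cap B) \leq \Prob(A) + \Prob(B)$. Combining these two observations yields $\Prob\left(\bigcup_{i=1}^n E_i^c\right) \leq \sum_{i=1}^n \delta_i$, and finally taking complements once more gives
\begin{equation*}
\Prob\left(\bigcap_{i=1}^n E_i\right) = 1 - \Prob\left(\bigcup_{i=1}^n E_i^c\right) \geq 1 - \sum_{i=1}^n \delta_i.
\end{equation*}

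There is essentially no obstacle here: this is a textbook consequence of the axioms of probability, and the only ``work'' is correctly applying De Morgan's law and subadditivity. The statement is used in the proof of Theorem \ref{thm:MetaBound} with $\delta_0 = \delta/2$ and $\delta_i = \delta/(2n)$ for $i = 1, \ldots, n$, so that $\sum_{i=0}^n \delta_i = \delta$, yielding the desired high-probability intersection of the intra-task bounds \eqref{eq:IntraTaskLevel} and the environment-level bound \eqref{eq:InterTaskLevel}.
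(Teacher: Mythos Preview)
Your proposal is correct and follows essentially the same approach as the paper: pass to complements via De Morgan, apply the union bound to the complementary events, and use $\Prob(E_i^c) \leq \delta_i$. Your additional remark on how the lemma is instantiated in the proof of Theorem~\ref{thm:MetaBound} is also accurate.
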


\begin{proof}
	First, note that 
	\begin{equation*}
\Prob(\Intersect{i=1}{n} E_i)  = 1 - \Prob(\Union{i=1}{n} E_i^C)  ,
	\end{equation*}
 where $E_i^C$ is the complementary event of $E_i$.

Using the union bound we have 
\begin{equation*}
\Prob(\Union{i=1}{n} E_i^C)   \leq  \sum_{i=1}^{n} \Prob(E_i^C)  
=\sum_{i=1}^{n} (1 -  \Prob(E_i)).
\end{equation*}

Therefore we have, 
\begin{equation*}
\Prob(\Intersect{i=1}{n} E_i) \geq 1 - \sum_{i=1}^{n} (1 -  \Prob(E_i)) 
 \geq  1 - \sum_{i=1}^{n} (1 -  (1-\delta_i)) = 1 - \sum_{i=1}^{n} \delta_i.
\end{equation*}

\end{proof}
%%%%%%%%%%%%%%%%%%%%%%%%%%%%%%%%%%%%%%%%%%%%%%%%%%%%%%%%%%%%%%%%%%%%%%%%%%%%%%%	
% bibliography for appendix text:
\putbib
\end{bibunit}

\end{document}